\documentclass{article}
\usepackage{graphicx} 

\usepackage[T1]{fontenc}
\usepackage[latin9]{inputenc}
\usepackage{amsmath}
\usepackage{amssymb}
\usepackage{mathtools}
\usepackage{amsthm}

\usepackage{appendix}
\usepackage{hyperref}

\usepackage{algorithm}

\usepackage{algpseudocode}
\algrenewcommand\algorithmicrequire{\textbf{Parameters:}}
\newcommand{\eqdef}{\stackrel{\mathrm{def}}{=}}


\usepackage{subcaption}

\newtheorem{proposition}{Proposition}[section]
\newtheorem{lemma}{Lemma}[section]

\newtheorem{theorem}{Theorem}[section]

\newtheorem{definition}{Definition}[section]
\newtheorem{remark}{Remark}[section]
\newtheorem{observation}{Observation}[section]

\usepackage{natbib}

\makeatletter
\makeatother
\usepackage{fullpage}
\usepackage{graphicx}
\graphicspath{{figures/}}
\usepackage{bbm}
\usepackage[]{color-edits} 
\usepackage{subcaption}
\usepackage{thm-restate}

\newcommand{\ind
}[1]{\mathbbm{1}_{#1}}

\newcommand{\reals}{\mathbb{R}}
\newcommand{\ALG}{\mathrm{ALG}}
\newcommand{\E}{\mathbb{E}}

\usepackage{tikz}

\usepackage[nice]{nicefrac}

\usepackage{orcidlink}

\newcommand{\prm}[1]{PM(#1)}
\addauthor{TE}{magenta}

\addauthor{TG}{red}

\newcommand{\supp}{\textbf{Supp}}
\newcommand{\supi}[1]{\supp_{#1}}

\title{The Competition Complexity of Prophet Inequalities with Correlations\footnote{T.~Ezra is supported by the Harvard University Center of Mathematical Sciences and Applications. T.~Garbuz is funded by the European Research Council (ERC) under the European Union's Horizon 2020 research and innovation program (grant agreement No. 101077862).}}
\author{Tomer Ezra\thanks{Harvard University, Cambridge, USA. Email: \texttt{tomer@cmsa.fas.harvard.edu}. \orcidlink{0000-0003-0626-4851}} 
\and Tamar Garbuz\thanks{Tel Aviv University, Tel Aviv, Israel. Email: \texttt{garbuz@mail.tau.ac.il}. \orcidlink{0009-0009-3922-4160}}}

\date{}

\begin{document}

\maketitle
\begin{abstract}

We initiate the study of the prophet inequality problem through the resource augmentation framework in scenarios when the values of the
rewards are correlated. Our goal is to determine the number of additional rewards an online algorithm requires to approximate the maximum value of the original instance. 
While the independent reward case is well understood, we extend this research to account for correlations among rewards. 
Our results demonstrate that, unlike in the independent case, the required number of additional rewards for approximation depends on the number of original rewards, and that block-threshold algorithms, which are optimal in the independent case, may require an infinite number of additional rewards when correlations are present. 
We develop asymptotically optimal algorithms for the following three scenarios: (1) where rewards arrive in blocks corresponding to the different copies of the original instance; (2) where rewards across all copies are arbitrarily shuffled; and (3) where rewards arrive in blocks corresponding to the different copies of the original instance, and values within each block are pairwise independent rather than fully correlated.

\end{abstract}

\newpage
\pagenumbering{arabic}

\section{Introduction}
In the classical prophet inequality setting \citep{krengel1977semiamarts,krengel1978semiamarts}, a decision-maker observes a sequence of $n$ rewards online one by one, and upon observing each, she needs to decide immediately and irrevocably whether to accept the current reward which terminates the process, or discard it and continue to the next reward.
The rewards are distributed according to some known distribution, and the goal of the decision-maker is to maximize her expected chosen reward.
Traditionally, the decision-maker's performance is measured using the competitive-ratio measure, which is the ratio between the expectation of the chosen value of the decision-maker and the expected value of a ``prophet'' that can observe values yet to come, and always selects the maximum value.

The seminal work of \citet{krengel1977semiamarts,krengel1978semiamarts} established that for every product distribution of the rewards (i.e., when the values of the rewards are independent), the decision-maker can always guarantee a competitive-ratio of $\nicefrac{1}{2}$, and no better competitive-ratio can be guaranteed for all distributions.
Moreover, \citet{samuel1984comparison} showed that this competitive-ratio can be achieved by a simple single-threshold algorithm (an algorithm that selects the first reward with a value exceeding a predetermined threshold calculated from the outset).
This structural property of prophet inequality made it very appealing for sequential pricing, as the result of \citet{samuel1984comparison} implies that the optimal competitive-ratio of sequential pricing auction can be achieved by a single anonymous static price.

Recent research has integrated the concept of resource augmentation into the prophet inequality framework. Specifically, \citet{DBLP:conf/sigecom/BrustleCDV22} introduced the notion of competition complexity within this context. This concept assesses how many additional rewards (or additional instances of the original prophet inequality) a given online algorithm requires to closely match the prophet's performance. The goal is to identify the number of additional instances needed for an algorithm to achieve 
$1-\epsilon$ of the expected value of the prophet, thereby measuring the algorithm's performance across varying levels of $\epsilon$. They found that while the exact competition complexity (where $\epsilon=0$) is unbounded, for 
$\epsilon>0$, it exhibits a double logarithmic growth in  
$\nicefrac{1}{\epsilon}$ in i.i.d. settings.

The appeal of competition complexity stems from the straightforward nature of online algorithms, which do not necessitate coordination among decisions. This simplicity broadens their applicability in various contexts. For example, consumers are generally more inclined to respond to simple, take-it-or-leave-it sale offers, than to engage in intricate, coordinated auctions that require simultaneous bidding of all participants.

In a follow-up work, \citet{DBLP:journals/corr/abs-2402-11084} explored the efficacy of block-threshold algorithms in non-i.i.d settings, showing that the result of \cite{DBLP:conf/sigecom/BrustleCDV22} extend to independent but not identical distributions.
Moreover, they established structural properties of the optimal online algorithms, and showed that block-threshold algorithms (algorithms that only adjust the threshold at the beginning of each copy) achieve the optimal competitive-ratio which grows double logarithmically in $\nicefrac{1}{\epsilon}$.

The result of \cite{DBLP:journals/corr/abs-2402-11084} has opened avenues for examining more complex scenarios, such as combinatorial prophet inequalities, which involve a decision-maker that can select multiple rewards according to some feasibility constraint.

Our work extends these ideas to the correlated prophet inequality problem, tackling scenarios where there is a dependency between the rewards.
Unlike classical settings (where independence is assumed), no online algorithm can achieve a competitive-ratio better than $\nicefrac{1}{n}$ \citep{hill1983stop}. The introduction of competition complexity in correlated settings breaks this barrier.

This shift is crucial, as it mirrors numerous practical situations where understanding and leveraging correlations between events can significantly enhance decision-making efficacy.

Consider for example a seller that wants to sell an umbrella and wants to give it to the one who values it the most. Each day a sequence of potential buyers consider buying the umbrella from the seller each with his own private value. The seller can only sell the umbrella when each buyer arrives in an online fashion. On rainy days, it is very likely that the value of most of the potential buyers is higher (and thus, correlated). Our guiding question in this paper for this scenario is: ``How many days are necessary for the seller while selling online to approximate well the expected maximum welfare of a single day?''

\subsection{Our Contribution}

In this paper, we consider the competition complexity of correlated prophet inequality under two natural arrival models, namely, the block and adversarial arrival models.
Under the block arrival model, the $kn$ rewards arrive according to their blocks, with all correlated rewards within a block arriving sequentially before any rewards from subsequent blocks.
Under the adversarial arrival model, the $kn$ rewards arrive in an arbitrary order.
Our main contribution can be partitioned into two parts, structural and quantitative.

\paragraph{Structural Contribution.}
We first observe that in contrast to previous works that considered the independent case (\citep{DBLP:conf/sigecom/BrustleCDV22} and \citep{DBLP:journals/corr/abs-2402-11084}), the competition complexity of the correlated prophet inequality depends on the number of rewards in the original instance $n$ (see Proposition~\ref{prop:hard1}). 

Our second structural insight is that the structural properties of the optimal strategy observed in \cite{DBLP:journals/corr/abs-2402-11084} do not carry over to the correlated prophet inequality case.
In particular, we show (see Proposition~\ref{prop:hard2}) that no block-threshold algorithm can achieve a finite competition complexity, even under the case where $n=2$, and under the block arrival model.

\paragraph{Quantitative Contribution.}
Our quantitative results can be summarized in the following table:
\begin{table}[h!]
\begin{center}
\begin{tabular}{ |c |c |c |}
\hline
  & Independent Prophet Inequality & Correlated Prophet Inequality \\
 \hline
 Block Arrival Model & $\Theta\left( \log\log\left( \nicefrac{1}{\epsilon}\right)\right)$ \citep{DBLP:journals/corr/abs-2402-11084} & $\mathbf{\Theta}\left( \boldsymbol{n}+ \boldsymbol{\log}\boldsymbol{\log}\left( \boldsymbol{\nicefrac{\mathbf{1}}{\mathbf{\epsilon}}}\right)\right)$ \\  
 \hline
 Adversarial Arrival Model & $\Theta\left(  \nicefrac{1}{\epsilon}\right)$ \citep{DBLP:journals/corr/abs-2402-11084} & $\mathbf{\Theta}\left(  \boldsymbol{\nicefrac{n}{\epsilon}}\right)$ \\ 
 \hline   
\end{tabular}
\caption{Competition complexity of prophet inequality: previous and new results. Our new results are indicated in bold. Our upper bound for the block arrival model is established in Theorem~\ref{thm:cc} and the lower bounds are established in Propositions~\ref{prop:hard1} and \ref{prop:tight}. Our result for the adversarial arrival model is established in Theorem~\ref{thm:cc-adversary}.} 
\label{table:results}
\end{center}
\end{table}

Our results establish that while the factor $n$, the number of rewards in the original instance, impacts the competition complexity additively in the block arrival model, it impacts the competition complexity multiplicatively in the adversarial arrival model. 

Our result for the block arrival model is not only asymptotically tight but also achieves tightness in the leading coefficient constants when aiming to approximately stochastically dominate the prophet (see discussion in Section~\ref{sec:hardness}).

All of our results are constructive, where upper bounds are devised by implementable efficient algorithms, and all of our new lower bounds are given by explicit hard instances.

\paragraph{Pairwise Independence.} Our last result is an adaptation of our algorithm and analysis for the correlated prophet inequality to the pairwise independent case.
In particular, we devise an algorithm with a simplified analysis that achieves the optimal asymptotic competition complexity of $O\left(\log\log(\nicefrac{1}{\epsilon})\right)$.
This pairwise independent case generalizes the main setting considered in \cite{DBLP:journals/corr/abs-2402-11084} for independent but not identical distributions. Moreover, our simplified analysis allows us to achieve a better multiplicative constant. In particular, our multiplicative constant is $\frac{\log(2)}{\log(\nicefrac{5}{4})} \approx 3.1$ better than the multiplicative constant of \citep{DBLP:journals/corr/abs-2402-11084} and our algorithm applies to more generalized settings.

 \subsection{Our Techniques}
 \paragraph{Block Arrival Model.}
We first use the construction of hard instances of \citet{hill1983stop} for correlated prophet inequality in order to devise a lower bound on the competition complexity for the correlated prophet inequality. 
We do so by bounding the expected value of a decision-maker on an instance with $k$ copies by at most $k$ times the expected value on a single copy.

     Then, we construct (Proposition~\ref{prop:hard2}) a hard instance, showing that single-threshold (and also block-threshold) algorithms have unbounded competition complexity. This instance is devised by two correlated random variables, where the first variable is $\frac{1}{\epsilon}^{G(\epsilon)}$ where $G(\epsilon)$ is a geometric random variable with a success probability $\epsilon$, and the second variable is always $\frac{1}{\epsilon}$ times the first variable. Any threshold set by the decision-maker would almost surely select the first variable (if selecting any) leading to a low expectation. This instance captures the difficulty of using thresholds in correlated instances compared to independent instances.

      In order to devise our main positive result of an upper bound of $O\left(n + \log\log\left(\nicefrac{1}{\epsilon}\right)\right)$ for the block arrival model, we analyze the distribution of the maximum value (see Figure~\ref{fig:prophet}). 
      \begin{figure}
          \centering
          \includegraphics[width=0.47\linewidth]{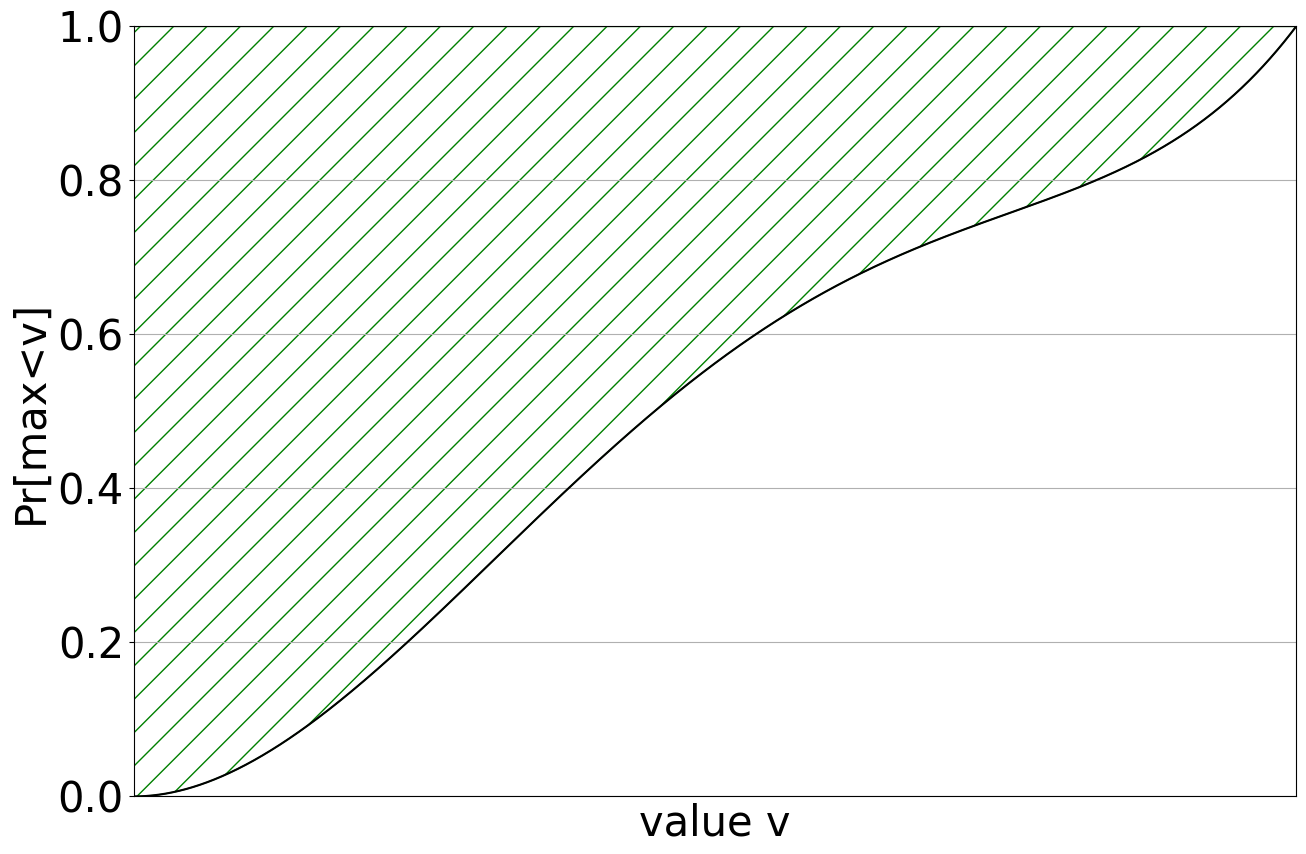}
          \caption{In this figure, the black line represents the CDF of the maximum value. The green area above it is the expected value of the prophet.}
          \label{fig:prophet}
      \end{figure}
     
     We first partition the expectation of the maximum, to values above and below the $(1-1/n)$-quantile. Our algorithm then works in two phases, where the first phase fully covers the contribution of values above the $(1-1/n)$-quantile, followed by the second phase, which covers the contribution of values below the $(1-1/n)$-quantile up to the lowest $\epsilon$-quantile.

     For values above the $(1-1/n)$-quantile, we carefully adapt the approach of Online Contention Resolution Schemes (OCRS) \citep{feldman2016online} for the independent prophet inequality into correlated settings. OCRS is a rounding method for online optimization problems.  
     The natural OCRS algorithm for the independent prophet inequality selects each value with a probability that is proportional to the probability of it being the maximum. This guarantees that if the algorithm selects a value, then it is distributed the same as the distribution of the maximum value. The only issue is that it selects a value with a probability that is strictly smaller than 1 (which is also known as the selectability factor). 
     Our first adaptation to the natural OCRS is to handle correlations which cause us to have a selectability factor of $\frac{1}{n}$ (in contrast to the traditional selectability factor of $\frac{1}{2}$ in the independent case).
     Our second adaptation is to apply a threshold at the $(1-1/n)$-quantile value (on top of the OCRS) which allows us to fully cover contribution above the $(1-1/n)$-quantile after applying our OCRS for $n+1$ copies of the original instance (see Figure~\ref{fig:p1}). If we did not have the added threshold, the algorithm would only cover a fraction of the contribution that approaches $1$ exponentially which only gives an exponential (rather than double exponential) convergence.
     The advantage of this approach is that if we select a value during these iterations, it is distributed the same as the distribution of the maximum value given that it is above the $(1-1/n)$-quantile. The disadvantage of this approach is that the limitation of OCRS for correlated values is that it cannot select with a probability of more than $\frac{1}{n}$, which means that if we want to select some value with probability at least $1-\epsilon$ (which is a necessary condition for getting the desired competition complexity), we must apply this OCRS for at least $k=\Omega\left(n\cdot \log(1/\epsilon)\right)$ copies overall. In other words, this type of algorithm does not select with a high enough probability at each iteration to be applied for all rounds to obtain the desired competition complexity.

    Therefore, after applying the OCRS for $O(n)$ iterations, we switch our algorithm to doubly exponentially decreasing quantile thresholds. Since we already covered the full contribution of values above the $(1-1/n)$-quantile of the maximum, setting a threshold as the $(1-1/n)$-quantile covers the values between the $(1-1/n)$-quantile and the $(1-1/n)^2$-quantile of the maximum. After applying the $(1-1/n)$-quantile as a threshold once, we can now decrease the threshold to the $(1-1/n)^2$-quantile which covers the value between the $(1-1/n)^2$-quantile, and the $(1-1/n)^4$-quantile.
    After each such round, we can square the quantile of the next threshold which covers all values above the $\epsilon$-quantile after $O\left(\log(n) + \log\log(\frac{1}{\epsilon})\right)$ (see Figure~\ref{fig:p2}).
    This leads to an overall competition complexity of $O\left(n+\log\log(\frac{1}{\epsilon})\right)$.
     
\begin{figure}
\begin{subfigure}{.47\textwidth}
  \centering
  \includegraphics[width=1\linewidth]{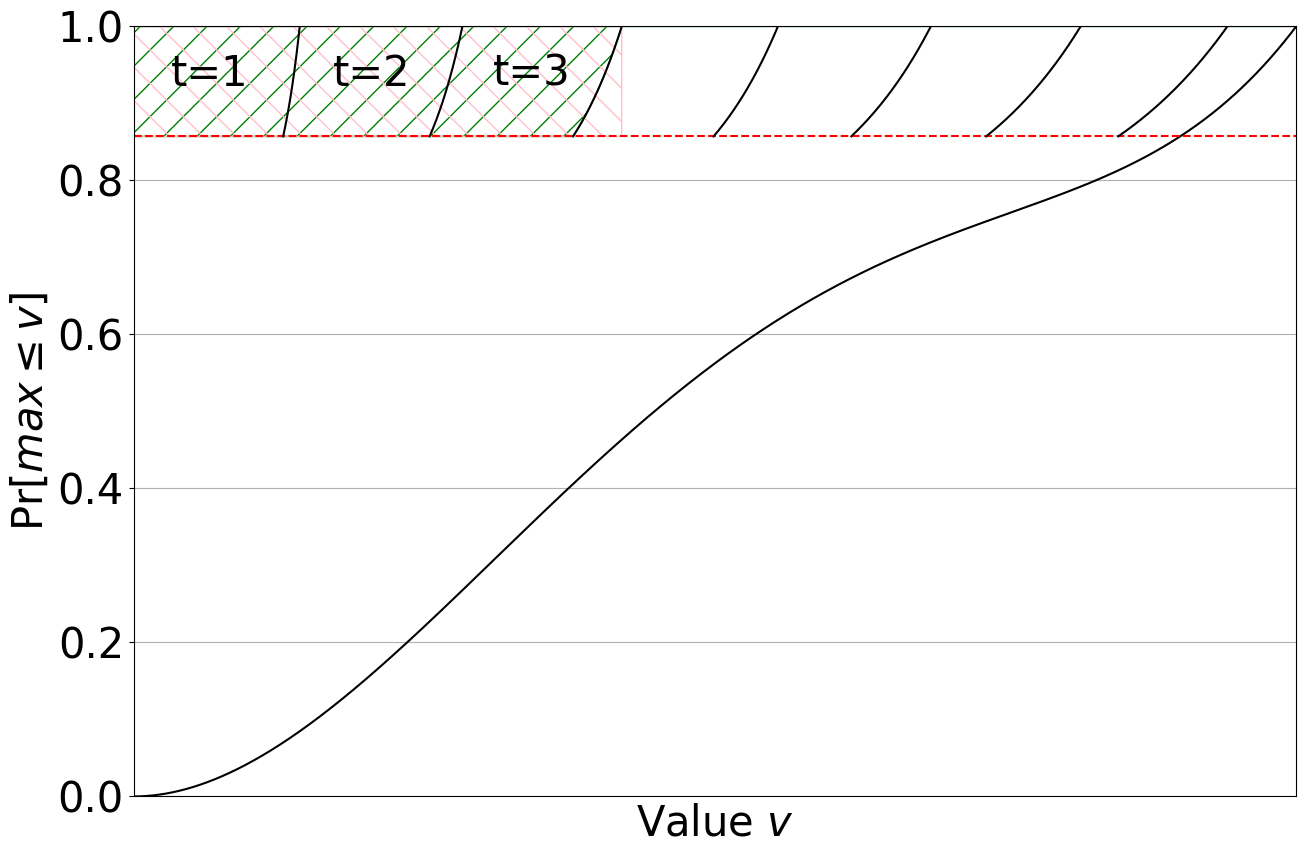}
  \caption{Phase 1: In this phase, the algorithm only selects values above the $(1-1/n)$-quantile. Here we demonstrate the expected value and the stopping probability at the end of the third round of this phase.\hphantom{xxxxxxxxxxxxxxxxxx}}
  \label{fig:p1}
\end{subfigure}
\hspace{0.05\textwidth} 
\begin{subfigure}{.47\textwidth}
  \centering
  \includegraphics[width=1\linewidth]{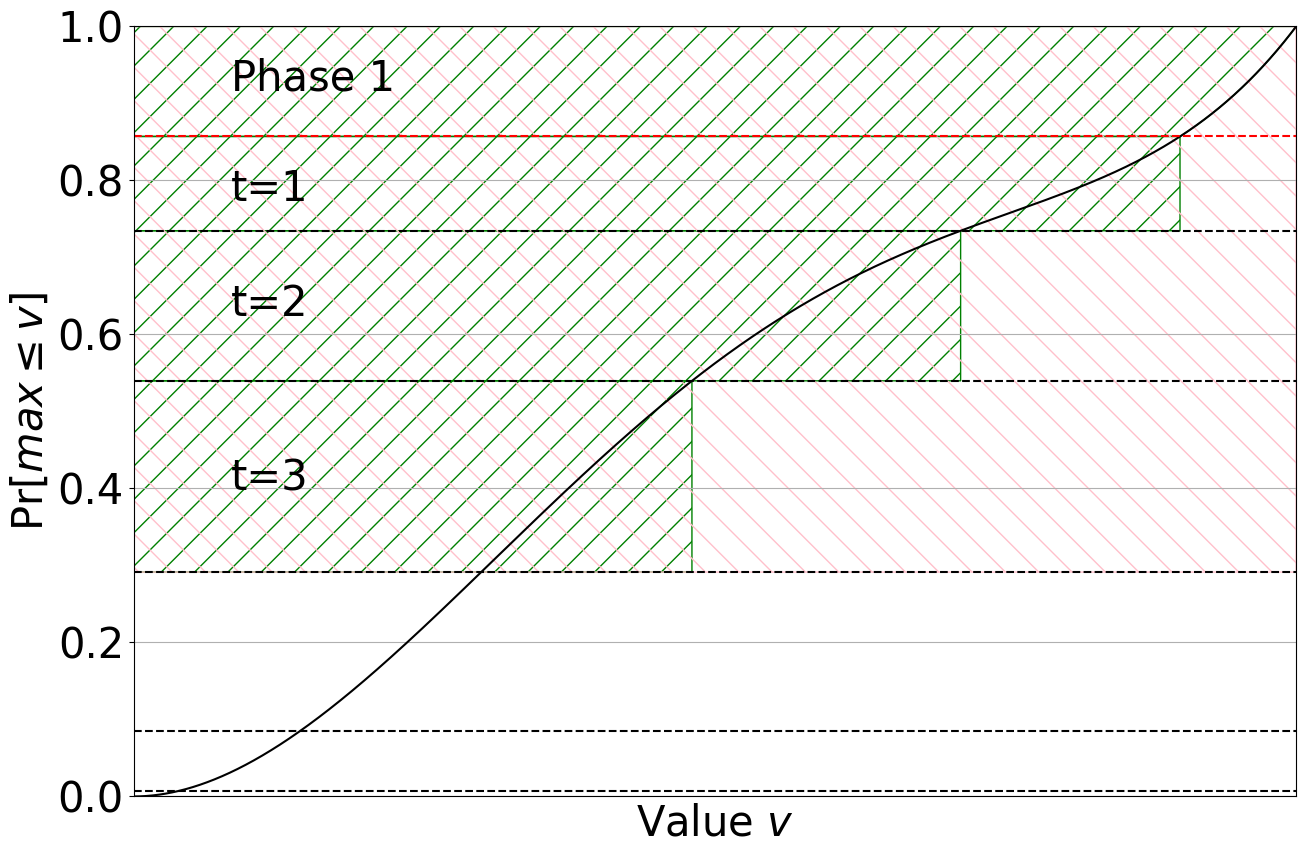}
  \caption{Phase 2: In this phase, the algorithm selects values above different thresholds (which are captured by the black dotted lines). Here we demonstrate the expected value and the stopping probability at the end of the third round of this phase.}
  \label{fig:p2}
\end{subfigure}
\caption{The accumulated value of our algorithm throughout the phases. The red dotted line is the $(1-1/n)$-quantile. The green area is the expected value of the algorithm up to the end of the round of each phase. The pink area (over the total area) is the probability that the algorithm selects some value up to the end of the round of the phase.}
\label{fig:alg}
\end{figure}

\paragraph{Adversarial Arrival Model.}
For the adversarial arrival model, our main technique for devising our algorithm is calculating the expected value of each reward above the threshold $T$ which we define as $1-\epsilon$ times the value of the prophet. By a union-bound type argument, we know that there must be a reward for which this expected value is at least $\nicefrac{\epsilon}{n}$ times the value of the prophet. Thus, applying a threshold algorithm for independent copies of this reward (while ignoring other rewards) allows us to use the surplus-revenue analysis of the prophet inequality \citep{KleinbergW19} which gives us an upper bound on the competition complexity of $O\left(\nicefrac{n}{\epsilon}\right)$.

We devise a matching lower bound by carefully combining the hard instance (Proposition~\ref{prop:hard1}) with an additional deterministic reward. The additional value reduces the contribution of all the randomized rewards to approximately $\nicefrac{\epsilon}{n}$ times the value of the prophet. Moreover, our arrival order is such that any algorithm can either accept the deterministic reward (which is not high enough to obtain $(1-\epsilon)$-fraction of the prophet) or obtain at most the contribution of one reward from each copy. Thus, to get $(1-\epsilon)$-fraction of the prophet, the algorithm must use at least $\Omega\left(\nicefrac{n}{\epsilon}\right)$ copies.

\paragraph{Pairwise Independence.}
Our last result, of adapting the analysis of the block arrival model to the case where the rewards are pairwise independent, relies on improving the guarantees of the first phase. We use several observations on threshold algorithms from \citep{caragiannis2021relaxing} to show that a single constant-quantile threshold can serve a similar purpose as our OCRS for the fully correlated case.
This allows us to both improve the number of rounds needed for the first phase from $n+1$ to $2$, and allows us to decrease the quantile of the threshold substantially to be a constant-quantile which removes the $\log(n)$ term in the number of rounds needed for the second phase.

\subsection{Further Related Work}
The prophet inequality captures many 
real-life scenarios such as hiring candidates, setting dynamic pricing for a sequence of potential buyers, etc.
The correlated prophet inequality was first studied in \citep{rinott1991orderings,rinott1992optimal,rinott1987comparisons} where they showed that the guarantees of the prophet inequality continues to hold under negative correaltions. Beyond negative correlations, \citet{immorlica2020prophet} considered the case of linear correlations.
Correlated prophet inequalities were used as a tool in \citep{cai2021simple} for the case where the distributions'  dependency is represented by a Markov Random Field. \citet{caragiannis2021relaxing} considered the case of correlated prophet inequality where the distributions are pairwise independent. \citet{dughmi2023limitations} extended the correlated prophet inequality framework to matroid feasibility constraints, and devised optimal guarantees for matroids that satisfy the partition property as well as hardness results for general matroids (both apply to pairwise independence distributions).

It has been studied extensively through several dimensions, such as extending to combinatorial settings, comparing to the best online instead of the prophet, different arrival models, and limited information settings. 

\paragraph{Combinatorial Settings.} The prophet inequality has been first extended to combinatorial setting in \citep{kennedy1985optimal,kertz1986comparison,kennedy1987prophet}  that studied the multiple-choice variant of the prophet inequality, 
matroid feasibility constraints~\citep{azar2014prophet,KleinbergW19}, 
polymatroids~\citep{dutting2015polymatroid}, matching~\citep{GravinW19,ezra2022prophet}, combinatorial auctions \citep{feldman2014combinatorial,dutting2020prophet,dutting2020log}, and downward-closed (and beyond) feasibility constraints~\citep{DBLP:conf/stoc/Rubinstein16}.

\paragraph{Different Arrival Models.} The prophet inequality problem has been extensively studied in  different arrival models beyond the standard worst-case order such as the random order arrival (also known as the prophet secretary) \citep{esfandiari2017prophet,azar2018prophet,ehsani2018prophet,CorreaSZ21},
and the free-order arrival model where the decision-maker can choose the order of distributions in which rewards are drawn from \citep{beyhaghi2018improved,AgrawalSZ20,PengT22}.

\paragraph{Alternative Benchmarks.} Our work suggests a different metric to measure the performance of online algorithms through resource augmentation. The prophet inequality framework has also been studied for many additional alternative benchmarks (instead of the traditional competitive-ratio) such as comparing to the best online algorithm (instead of comparing to the prophet) \citep{niazadeh2018prophet,papadimitriou2021online,saberi2021greedy,kessel2022stationary,braverman2022max,ezra2023next,srinivasan2023online,ezra2023importance,ezra2024choosing}, measuring the expected ratio to the prophet instead of the ratio of expectations \citep{ezra2023prophet}, and allowing the decision-maker to replace its chosen value up to a certain amount of times \citep{ezra2018prophets}.

\section{Model}

\paragraph{Correlated Prophet Inequality.} Consider the problem where a decision-maker faces a sequence $v =(v_1,\ldots,v_n)$ of $n$ rewards that arrive online and needs to select one of them in an online fashion (not selecting a value is equivalent to receiving a reward of $0$). The sequence of rewards $v$ is distributed according to some distribution $F\in \Delta_n $ where $\Delta_n$ is the set of all distributions over sequences in $\reals_{\geq 0 }^n$. Note that the distribution $F$  is over the entire sequence and allows correlations between different rewards of the sequence. We refer to the case where the rewards are independent as the independent prophet inequality. 
Traditionally, the performance of the decision-maker is measured using the competitive-ratio benchmark, which is the ratio between the expected value chosen by the decision-maker (that makes decisions online) and the expected value of the maximum in hindsight. We denote an online algorithm for the decision-maker by $\ALG$, and the (possibly random) value chosen by $\ALG$ on a realized sequence $v$ by $\ALG(v)$.
Thus, the competitive-ratio of an algorithm $\ALG$ is $$ \rho(\ALG) = \inf_{F \in  \Delta_n} \frac{\E_{v\sim F}[\ALG(v)]}{\E_{v\sim F}[\max_{i} v_i]},$$
where the expectation of the numerator might also depend on the randomness of the algorithm if such exists.

\paragraph{Competition Complexity.}  Our goal is to compare the expected performance of an online algorithm on $k$ independent copies of the original instance, to the expected maximum value of a single instance,  and ask how many copies are needed for achieving a certain approximation to the value of the prophet. 
Formally, consider an algorithm $\ALG$ that observes a sequence of $n k$ values partitioned into $k$ rounds $v^{(1)},\ldots,v^{(k)}$, where for each round $t \in [k] \eqdef \{1,\ldots,k\}$, $v^{(t)} = (v^{(t)}_1,\ldots,v^{(t)}_n)$ is sampled independently from $F$. The algorithm $\ALG$ observes the $n k$ values one by one, and needs to choose one of them in an online fashion.
The goal is to design an algorithm with as few as possible copies that approximates the expected maximum value of a single copy.
We measure it by the competition complexity measure.

\begin{definition}[Competition Complexity]\label{def:competition} Given $\varepsilon \ge 0$, the $(1-\varepsilon)$-competition complexity of algorithm $\ALG$ is the smallest positive integer  $k(n,\varepsilon)$ such that for  every $F\in \Delta_n$, and every $k\ge k(n,\varepsilon)$, it holds that
\[
\E_{v\sim F^k}[\ALG(v)] \geq (1-\varepsilon) \cdot \E_{v \sim F}[\max_{i} v_i].
\]
\end{definition}

The case $\varepsilon = 0$ is also referred to as \emph{exact} competition complexity; it was shown in 
\cite[Theorem 2.1]{DBLP:conf/sigecom/BrustleCDV22} that the exact competition complexity is unbounded even for the i.i.d.~case (namely, where $F=F' \times \ldots \times F' $ for some distribution $F' \in \Delta_1$). So, naturally, our focus will be on the case $\varepsilon > 0$.

Given a vector $v=(v_1,\ldots,v_n)$ and a parameter $i \leq n$ we denote by $v_{\leq i}$ the prefix of length $i$ of $v$, i.e., $v_{\leq i} =  (v_1,\ldots,v_i)$.
Given a distribution $F$, and a vector $x=(x_1,\ldots,x_i)$ of length $i \leq n$, we denote by $F_x$ the distribution of $v$ conditioned on $v_{\leq i} =x$. 
We also use the notation $\prm{x}$ to represent the probability that, for a prefix $x$ of length $i\leq n$, the element $x_i$ is the maximum value within the entire sequence. This probability is computed based on the randomness of future reward realizations, and is formulated by $\prm{x} = \Pr_{v \sim F_x}[x_i = \max_j v_j]$.

For simplicity of presentation, we assume throughout the paper that the distribution $F$ is discrete, and we denote its support by $\supp$. We also denote by $\supi{\leq i}$ the set of all the prefixes of length $i\leq n$ of vectors in $\supp$. For a vector $x \in \supi{\leq i}$ we define $f(x) = \Pr_{v \sim F}[v_{\leq i}=x]$. 
Moreover, we assume that for every realization $v\in \supp$, the maximum value is unique, i.e., $|\{j \mid v_j = \max_i v_i\}|=1$.
These assumptions are made to simplify the analysis, and our algorithms can be adjusted easily to hold with respect to non-discrete distributions or when the maximum is not unique using standard techniques from the literature.

We denote by $F^*$  the distribution of $\max_{i} v_i$ where $v\sim F$.
When using $\log$, the base of the logarithm is $2$ unless stated otherwise. When writing $\Pr[I]$, for an indicator $I$, we mean  $\Pr[I=1]$. If $I$ was not explicitly defined, it means that $I=0$. For a $q\in [0,1]$ we use $Ber(q)$ to denote a Bernoulli random variable with probability $q$, and $G(q)$ to denote a geometric random variable with a success probability of $q$.

\paragraph{Online Contention Resolution Schemes (OCRS).} Part of our algorithm is based on the OCRS approach. An OCRS solves the following selection problem.
An OCRS algorithm works on a ground set of elements, in which a random subset of it is \textit{active}. The algorithm observes the elements one by one, and upon observing each, the element's activeness is revealed to the algorithm, and the algorithm needs to decide whether to select it or not.
The goal of the algorithm is to overall select a feasible subset of the active elements (among a predefined feasibility constraint), such that only active elements are selected and each active element is selected with the same probability\footnote{In the definition of OCRS by \citet{feldman2016online} they defined it such that the algorithm needs to select with a probability of at least $c$, but for our analysis, we require that it will be exactly $c$.} $c$ which is called the \textit{selectability factor}.  

\paragraph{Quantile Thresholds and Stochastic Domination.} Part of the analysis of our algorithm uses thresholds. A $q$-quantile threshold with respect to distribution $D$ is a value $T$ such that $\Pr_{v \sim D}[v < T] \leq q \leq  \Pr_{v \sim D}[v \leq  T]$.
Given two distributions $D_1,D_2$, we say that $D_1$ stochastically dominates $D_2$ if for every value $x$ it holds that $\Pr_{v\sim D_1}[ v \geq x] \geq \Pr_{v\sim D_2}[ v \geq x] $. We say that $D_1$ $c$-stochastically dominates $D_2$ for $c<1$, if  for every value $x$ it holds that $\Pr_{v\sim D_1}[ v \geq x] \geq c\cdot \Pr_{v\sim D_2}[ v \geq x] $. We say that $D_1$ stochastically dominates $D_2$ up to $\epsilon$-quantile if for every value $x$, either $\Pr_{v\sim D_1}[ v \geq x] \geq \Pr_{v\sim D_2}[ v \geq x] $ or $\Pr_{v\sim D_2}[ v < x] \leq \epsilon$. It holds that if $D_1$ stochastically dominates $D_2$ up to $\epsilon$-quantile then it also $(1-\epsilon)$-stochastically dominates $D_2$. If the distribution of the value returned by an algorithm $\ALG$ stochastically dominates $F^*$ up to $\epsilon$-quantile, then $\ALG$ satisfies the requirement in Definition~\ref{def:competition}. 
\begin{observation}\label{obs:quantile}
To show that the competition complexity $k(n,\epsilon) \leq k'$ it suffices to show that for every distribution $F\in \Delta_n$ there exists an online algorithm $\ALG$ that observes $k'$ rounds of values such that the distribution of the value returned by $\ALG$ stochastically dominates $F^*$ up to $\epsilon$-quantile. 
\end{observation}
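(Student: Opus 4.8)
The plan is to unwind the definitions and show that the stated sufficient condition directly implies the inequality in Definition~\ref{def:competition}. Fix an arbitrary distribution $F \in \Delta_n$ and suppose we are given, for the number of rounds $k'$, an online algorithm $\ALG$ whose output distribution $D_{\ALG}$ (on $k'$ i.i.d.\ rounds drawn from $F$) stochastically dominates $F^*$ up to $\epsilon$-quantile. First I would recall from the paragraph immediately preceding the observation that ``stochastic domination up to $\epsilon$-quantile'' implies ``$(1-\epsilon)$-stochastic domination'', so $D_{\ALG}$ $(1-\epsilon)$-stochastically dominates $F^*$; that is, $\Pr_{u \sim D_{\ALG}}[u \ge x] \ge (1-\epsilon)\Pr_{w \sim F^*}[w \ge x]$ for every $x$.

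The key step is then the layer-cake (tail) formula for the expectation of a nonnegative random variable: for any nonnegative random variable $Z$, $\E[Z] = \int_0^\infty \Pr[Z \ge x]\,dx$ (a sum over the support in the discrete case). Applying this to $Z \sim D_{\ALG}$ and to $Z \sim F^*$ and integrating the pointwise tail inequality gives
\[
\E_{v \sim F^{k'}}[\ALG(v)] \;=\; \int_0^\infty \Pr_{u\sim D_{\ALG}}[u \ge x]\,dx \;\ge\; (1-\epsilon)\int_0^\infty \Pr_{w\sim F^*}[w \ge x]\,dx \;=\; (1-\epsilon)\,\E_{v\sim F}[\max_i v_i],
\]
which is exactly the inequality required in Definition~\ref{def:competition}. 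Since $F$ was arbitrary and the same bound holds for every $k \ge k'$ (adding more rounds can only help — formally, $\ALG$ can ignore the extra rounds), we conclude $k(n,\epsilon) \le k'$.

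There is no real obstacle here; the statement is essentially a reformulation packaging together the two facts asserted just above it (that domination up to $\epsilon$-quantile yields $(1-\epsilon)$-domination, and that the expectation is the integral of the tail). The only points worth stating carefully are: (i) that $\E[\ALG(v)]$ really is the tail integral of $D_{\ALG}$ — this uses that $\ALG$ always returns a nonnegative value, either a realized reward or $0$; and (ii) the monotonicity in $k$, so that the condition for a single value $k'$ upgrades to ``every $k \ge k'$'' as Definition~\ref{def:competition} demands. I would keep the write-up to these few lines.
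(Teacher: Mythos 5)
Your proof is correct and matches the approach the paper intends: the observation is left unproven in the paper (stated as an immediate consequence of the facts asserted in the preceding paragraph), and your write-up simply fills in the two routine details — the tail-integral step showing that $(1-\epsilon)$-stochastic domination of $F^*$ implies the required expectation bound, and the monotonicity-in-$k$ step obtained by having the algorithm ignore rounds beyond $k'$. Nothing further is needed.
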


\paragraph{Additional Models.} Beyond the repeated block model, we consider the following cases:
\begin{itemize}
\item Adversarial order: In this model, the $n\cdot k$ rewards arrive in an adversarial order (i.e., not partitioned into $k$ blocks).
\item  Pairwise independent prophet inequality: In this model, the rewards of each block are pairwise independence. In particular, this generalizes both the i.i.d. and the independent (but not identical) cases considered in \citep{DBLP:conf/sigecom/BrustleCDV22} and \citep{DBLP:journals/corr/abs-2402-11084}, respectively.
\end{itemize}

\section{Competition Complexity of Correlated Prophet Inequality}\label{sec:ccc}

In this section, we present an algorithm with optimal competition complexity for the correlated prophet inequality.

Our algorithm has two phases consisting of $m_1,m_2$ rounds respectively.
The first phase of the algorithm is implemented in Algorithm~\ref{alg:cap-p1}.
\begin{algorithm}
\caption{Competition Complexity of Correlated Prophet Inequality - Phase 1}\label{alg:cap-p1}
\begin{algorithmic}[1]

\Require{$m_1$ - number of rounds, $T_0$ - a threshold}

\For{ round $t=1,\ldots , m_1   $}
\State Let $A_{t,1} = 1$
\For { $i = 1,\ldots,n$} 
\State Sample $Y_{t,i} \sim Ber(\prm{v_{\leq i}^{(t)}}) $
\State Sample $Z_{t,i}\sim Ber\left(\frac{1}{n  - \sum_{j \in [i-1]}  \prm{v^{(t)}_{\leq j}}}\right)$
\If{$Y_{t,i}=Z_{t,i}=A_{t,i}=1$}
\State $A_{t,i+1} = 0$
\If{ $v_i^{(t)}>T_0$}
\State  Return $v_i^{(t)}$ 
\EndIf
\Else
\State $A_{t,i+1} = A_{t,i}$
\EndIf
\EndFor
\EndFor
\end{algorithmic}
\end{algorithm}

Algorithm~\ref{alg:cap-p1} applies the same procedure for all of the $m_1$ rounds it runs. In each such round, it applies an OCRS for one copy of the correlated prophet inequality instance with a selectability factor of $\frac{1}{n}$. In this OCRS, the ground set of elements is the $n$ rewards, and the feasibility constraint is to select at most one element.
The OCRS at round $t$ is implemented by the random variables of $\{Y_{t,i}, Z_{t,i}\}_{i}$ that determine the activeness of the rewards and which of them to select. Variable  $Y_{t,i}=1$ means that reward $i$ is active for round $t$ (and $Y_{t,i}=0$ or undefined means that $i$ is not active for this round). It is equivalent to sampling the future arrivals of the copy given the current history and $i$ is active if it is the maximum among the history of the current copy and the fresh sample of rewards that didn't arrive yet at this round.
Variables $Z_{t,i}$ decide how the OCRS resolves contentions between the different rewards. If $Z_{t,i}=1$ it means that the OCRS prioritized selecting reward $i$.
The variable $A_{t,i}=0$ means that an active reward was prioritized before reward $i$ in round $t$.
A reward is selected if it is the first one of the round that is both active and prioritized, and is above the threshold $T_0$.

The second phase is defined in Algorithm~\ref{alg:cap-p2}.
\begin{algorithm}
\caption{Competition Complexity of Correlated Prophet Inequality - Phase 2}\label{alg:cap-p2}
\begin{algorithmic}[1]
\Require{$m_2$ - number of rounds, $T_0$ - an initial threshold}
\State Let $p = \Pr[\max_j v_j < T_0 ]$ \label{step:p}
\State Let $T_i $ be a $  p^{(2^i)}$-quantile threshold with respect to distribution $ F^*$
\For{ round $t=1,\ldots , m_2  $}
\State Let $B_{t}=1$
\For{$i=1,\ldots, n$}
\If{$v_i^{(t)} \geq T_{\max\{t -2,0\}}$}
\State Return $v_i^{(t)}$
\EndIf
\EndFor
\EndFor
\end{algorithmic}
\end{algorithm}

The second phase of the algorithm sets a series of quantile thresholds for doubly exponentially decreasing quantiles. Recall that $p$ (defined in Step~\ref{step:p} of Algorithm~\ref{alg:cap-p2}) is the probability that the maximum in a single round is strictly below the first threshold. Our algorithm's thresholds guarantee that at the end of round $t$, the probability that no reward is chosen is $p^{2^{t-1}}$. 
The indicator $B_t$ defines whether we reach round $t$ (if $B_t$ is not defined, we treat it as $B_t=0$ which means that we didn't reach round $t$). One difference from the description in the introduction (for simplicity of presenting it without the formal model) is that we are using the threshold $T_0$ twice at the beginning of the second phase (instead of only once in the description in the introduction). The goal of using it twice is to handle cases where $T_0$ lies on a point-mass, and since in the first phase we only cover values that are strictly above $T_0$, this step is necessary for the analysis. If there are no point masses, this step can be avoided.

Our overall algorithm is captured in Algorithm~\ref{alg:cap}.
\begin{algorithm}
\caption{Competition Complexity of Correlated Prophet Inequality}\label{alg:cap}
\begin{algorithmic}[1]
\Require{$\epsilon$ - target approximation}

\State Let $T_0$ be a $ (1- \frac{1}{n})  $-quantile threshold with respect to distribution $F^*$
\State Let $m_1=n+1$ 
\State Let $ m_2 = \lceil \log\log\frac{1}{\varepsilon} + \log n \rceil +2 $
\State Apply Algorithm~\ref{alg:cap-p1} with Parameters $m_1$ and $T_0$ \Comment{Phase 1}
\State If Algorithm~\ref{alg:cap-p1} didn't return a value, apply Algorithm~\ref{alg:cap-p2} with parameters $m_2$ and $T_0$ \Comment{Phase 2}
\end{algorithmic}
\end{algorithm}

The algorithm runs first Algorithm~\ref{alg:cap-p1} for $m_1$ iterations with $T_0$ defined as the $\left(1-\frac{1}{n}\right)$-quantile threshold. This phase covers the contribution of values above $T_0$. If the algorithm didn't select a reward in the first phase, then in the second phase, which runs Algorithm~\ref{alg:cap-p2} with $m_2$, and the same value of $T_0$, the algorithm applies a series of thresholds in which it covers the contribution of values below $T_0$, where after each round $t$ of Algorithm~\ref{alg:cap-p2}, combined with the contribution of the first phase, it overall covers the contribution of values above $T_{t-1}$.

\begin{theorem}\label{thm:cc}
Algorithm~\ref{alg:cap} achieves a competition complexity of $(1+o(1)) \cdot (n +\log\log\frac{1}{\varepsilon})$.    
\end{theorem}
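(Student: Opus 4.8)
The plan is to analyze the two phases separately and show that, together, the distribution of the value returned by Algorithm~\ref{alg:cap} stochastically dominates $F^*$ up to the $\varepsilon$-quantile, so that Observation~\ref{obs:quantile} applies. Write $T_0$ for the $(1-\tfrac1n)$-quantile threshold of $F^*$, and let $p = \Pr[\max_j v_j < T_0] \le \tfrac1n$. The first claim to establish is that after Phase~1 (running Algorithm~\ref{alg:cap-p1} for $m_1 = n+1$ rounds), conditioned on having selected a value, that value is distributed exactly as $F^*$ conditioned on being $\ge T_0$, and the probability of \emph{not} having selected anything is at most $p$. For the distributional statement I would verify that the $\{Y_{t,i}\}$ correctly implement "element $i$ is active iff $v_i^{(t)}$ is the running maximum of copy $t$" — this is exactly the definition $\prm{v_{\le i}^{(t)}} = \Pr[v_i = \max_j v_j \mid v_{\le i}]$ — and that the $\{Z_{t,i}\}$ with the stated success probabilities $\big(n - \sum_{j<i}\prm{v_{\le j}^{(t)}}\big)^{-1}$ constitute a valid OCRS with selectability factor exactly $\tfrac1n$ for the rank-$1$ matroid, using that $\sum_i \prm{v_{\le i}^{(t)}} = 1$ in expectation over the copy. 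The telescoping/conditioning computation here is the technical heart: one shows that in each round the probability of selecting a value above $T_0$, conditioned on reaching that round, equals $\tfrac1n \cdot \Pr[\max \ge T_0] = \tfrac1n(1-p)$, and the selected value inherits the correct conditional law; hence after $n+1$ such rounds the miss probability is $(1 - \tfrac{1-p}{n})^{n+1}$, which I must check is at most $p$ — here the choice $m_1 = n+1$ and $p \le \tfrac1n$ are what make the bound tight, and this is the step I expect to be the main obstacle.

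Next I handle Phase~2. Given that Phase~1 did not return (probability $\le p$), the "residual" distribution we must still cover is $F^*$ conditioned on $\max < T_0$. Define $T_i$ as a $p^{(2^i)}$-quantile threshold of $F^*$. The key invariant is: at the end of round $t$ of Algorithm~\ref{alg:cap-p2}, the probability that no value has been selected (across both phases) is at most $p^{2^{t-1}}$. The base case $t=1$ is the output of Phase~1 (using $T_0$ gives miss probability $p = p^{2^0}$), and round $t=2$ also uses $T_0$ and brings the miss probability to $p^2 = p^{2^1}$ — the reason the introduction's single use of $T_0$ becomes two uses in the formal algorithm is to absorb a possible point mass at $T_0$, since Phase~1 only captures values \emph{strictly} above $T_0$ while a $T_0$-threshold in Phase~2 captures values $\ge T_0$. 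For the inductive step, conditioned on missing through round $t$, the residual law is $F^*$ conditioned on being below $T_{\max\{t-2,0\}}$, an event of probability $\le p^{2^{t-1}}$; applying threshold $T_{t-1}$ (a $p^{2^{t-1}}$-quantile) in round $t+1$ succeeds with probability $\ge 1 - p^{2^{t-1}}$ relative to that conditional mass, so the new miss probability is $\le (p^{2^{t-1}})^2 = p^{2^t}$, and — crucially — whenever we do select in round $t+1$ the value is $\ge T_{t-1}$, which combined with the Phase-1 guarantee gives stochastic domination of $F^*$ above $T_{t-1}$. Iterating, after $m_2$ rounds the miss probability is $\le p^{2^{m_2 - 1}}$, and the value returned stochastically dominates $F^*$ restricted to the top $(1 - p^{2^{m_2-1}})$-mass.

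It then remains to verify the arithmetic: we need $p^{2^{m_2-1}} \le \varepsilon$. Since $p \le \tfrac1n \le \tfrac12$ (for $n \ge 2$; the case $n=1$ is trivial), it suffices that $2^{-2^{m_2-1}/n} \le \varepsilon$, i.e. $2^{m_2-1} \ge n \log\tfrac1\varepsilon$, i.e. $m_2 \ge 1 + \log n + \log\log\tfrac1\varepsilon$, which is exactly what the choice $m_2 = \lceil \log\log\tfrac1\varepsilon + \log n\rceil + 2$ guarantees. Adding the two phases, the total number of rounds is $m_1 + m_2 = (n+1) + \lceil \log\log\tfrac1\varepsilon + \log n\rceil + 2 = n + \log\log\tfrac1\varepsilon + O(\log n) = (1+o(1))(n + \log\log\tfrac1\varepsilon)$, since $\log n = o(n)$. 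Assembling: by Observation~\ref{obs:quantile}, stochastic domination of $F^*$ up to the $\varepsilon$-quantile yields $\E[\ALG(v)] \ge (1-\varepsilon)\E[\max_i v_i]$, which is the claimed bound. The part most likely to require care beyond routine calculation is the exact OCRS analysis in Phase~1 — establishing that the specified Bernoulli parameters are well-defined (denominators positive), that the scheme selects each active element with probability exactly $\tfrac1n$, and that conditioning on selection preserves the law of the max restricted above $T_0$ — together with checking that $(1-\tfrac{1-p}{n})^{n+1} \le p$ holds uniformly over $p \in (0, \tfrac1n]$.
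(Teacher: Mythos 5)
Your overall plan mirrors the paper's: two phases, an OCRS with selectability $\tfrac1n$ above a $(1-\tfrac1n)$-quantile threshold $T_0$, then doubly-exponentially decaying quantile thresholds, concluding via Observation~\ref{obs:quantile}. But the step you flagged as ``the main obstacle'' is in fact a genuine gap, and the inequality you propose to check is \emph{false} in general.

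Concretely, you want to show that after Phase~1 the miss probability is at most $p = \Pr[\max_j v_j < T_0]$, i.e.\ $\bigl(1 - \tfrac{1-p}{n}\bigr)^{n+1} \le p$. Two things go wrong. First, the per-round selection probability in Phase~1 is $\tfrac{q}{n}$ with $q = \Pr[\max_j v_j > T_0]$ (strict inequality, since Algorithm~\ref{alg:cap-p1} only returns values strictly above $T_0$), not $\tfrac{1-p}{n}$ with $1-p = \Pr[\max_j v_j \ge T_0]$; these differ exactly when $F^*$ has a point mass at $T_0$. Second, and more importantly, the correct Phase-1 miss probability $(1 - \tfrac{q}{n})^{n+1}$ is \emph{not} bounded by $p$: taking an $F^*$ that places significant mass exactly at $T_0$, one can have $q$ arbitrarily close to $0$ (so the miss probability is close to $1$) while $p$ stays bounded away from $1$; in the extreme case of a single point mass at $T_0$, $q = p = 0$ and the miss probability is $1$. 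The paper never proves or uses such a bound. For Case~1 ($y > T_0$), it instead sums the geometric series directly to obtain $\Pr[\ALG \ge y] \ge \tfrac{1-(1-q/n)^{n+1}}{q}\,q_y$ (Lemmas~\ref{lem:stop} and \ref{lem:qy}), and then verifies the elementary inequality $1 - (1-q/n)^{n+1} \ge q$ for $q \le \tfrac1n$. For the Phase-2 invariant (Lemma~\ref{lem:bt1}), the Phase-1 factor $(1-q/n)^{m_1}$ is simply dropped (it is $\le 1$), and the initial doubling $p \mapsto p^2$ comes entirely from the fact that Algorithm~\ref{alg:cap-p2} applies $T_0$ in \emph{both} rounds $1$ and $2$ --- precisely the ``two uses of $T_0$'' whose purpose you correctly sensed but misattributed to Phase~1. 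So your numerical invariant (miss probability $\le p^{2^{t-1}}$ at the end of round $t$ of Phase~2) is right, but the derivation you give for its base case, and the Case-1 argument that relies on ``Phase~1 misses with probability $\le p$,'' do not hold; also the conditional law of the Phase-1 output should be $F^*$ conditioned on $\{\max > T_0\}$, not $\{\max \ge T_0\}$.

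One further small error: $T_0$ is a $(1-\tfrac1n)$-quantile threshold of $F^*$, so $p \le 1-\tfrac1n$, not $p \le \tfrac1n$ as you state; the intended chain is $p \le 1-\tfrac1n \le 2^{-1/n}$, giving $p^{2^{m_2-1}} \le 2^{-2^{m_2-1}/n} \le \varepsilon$ with your choice of $m_2$. With that correction your final round-counting $m_1 + m_2 = n + \log\log\tfrac1\varepsilon + O(\log n) = (1+o(1))(n + \log\log\tfrac1\varepsilon)$ is fine.
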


   For simplicity, we denote the value of  $\lceil \log\log\frac{1}{\varepsilon} + \log n \rceil  $ by $r$. 
Our algorithm runs for $m_1+m_2 = n +r+3 $ rounds, i.e., the competition complexity is $k(n,\varepsilon) = n+r+3$. We also define the variable  $B_{r+3}$ that denotes whether the algorithm arrived to the end without returning a value. We set $B_{r+3}$ to be $1$ if the algorithm didn't return a value.   
To prove Theorem~\ref{thm:cc}, we first prove that Algorithm~\ref{alg:cap} is well defined.
In particular, we need to show that 
the probabilities that define $Y_{t,i}$ and $Z_{t,i}$ are indeed between $0$ and $1$ for all $t,i$.
This is true since for every prefix $x$, $\prm{x} \in [0,1]$, and therefore it holds that $\sum_{j \in [i-1]} \prm{v^{(t)}_{\leq j}}  \in [0,n-1]$, and thus, $$ \frac{1}{n} \leq \frac{1}{n  - \sum_{j \in [i-1]}  \prm{v^{(t)}_{\leq j}}} \leq  \frac{1}{n-(n-1)} = 1, $$
which implies that the algorithm is well-defined.

\paragraph{Approximation.} By Observation~\ref{obs:quantile} it is sufficient to show that for every $y\geq T_r$ it holds that \begin{equation} \label{eq:sd}
    \Pr_{v\sim F^k}[\ALG(v) \geq y] \geq \Pr_{v\sim F}[\max_i v_i  \geq y].
\end{equation} 
This is sufficient since for $y< T_r$ it holds that $$\Pr[ \max_i v_i < y]  \leq \Pr[ \max_i v_i < T_{r}]  \leq p^{2^r} \leq \left(1-\frac{1}{n}\right)^{2^r} \leq  \varepsilon,$$
where $p$ is as defined in Algorithm~\ref{alg:cap-p2}, which by definition of $T_0$ satisfies that $p \leq 1-\frac{1}{n}$.

We define the following variables:
\begin{eqnarray}
\label{eq:q}
    q  & = & \Pr[\max_i v_i > T_0] =  \sum_{i=1}^n\sum_{x\in \supi{\leq i}} f(x)\cdot \prm{x}  \cdot \ind{x_i > T_0}, 
    \mbox{ and }\\
        q_y & = & \Pr[\max_i v_i \geq y] =  \sum_{i=1}^n\sum_{x\in \supi{\leq i}} f(x)\cdot \prm{x}  \cdot \ind{x_i \geq  y}.\label{eq:qx}
\end{eqnarray}
We partition the proof of Inequality~\eqref{eq:sd} into three cases, depending on the value of $y$.

\paragraph{Case 1: $y>T_0$.}
Note that since $y>T_0$ it holds that $q_y \leq q$.
We first show that we can assume that $q>0$ as otherwise, Inequality~\eqref{eq:sd} holds trivially for values of $y>T_0$ since $$    \Pr_{v \sim F^k}[\ALG(v) \geq y] \geq 0=q = \Pr_{v\sim F}[\max_i v_i  > T_0]  \geq  \Pr_{v\sim F}[\max_i v_i  \geq y]   .$$
For this case, we analyze the first phase of the algorithm. We next prove the following property for the event of reaching reward $i$ at round $t$. 
\begin{lemma}
The probability of reaching reward $i$ at round $t$ (of Algorithm~\ref{alg:cap-p1}), given that we reached round $t$  for $t\leq m_1$, with history $v^{(t)}_{\leq i-1} = x$, for some $x\in \supi{\leq i-1}$ is the following expression:
\begin{equation}\label{eq:ati}
    \Pr[A_{t,i}\mid A_{t,1} \wedge v^{(t)}_{\leq i-1} =x]  = 1- \sum_{j<i} \frac{\prm{v^{(t)}_{\leq j}}}{n}.
    \end{equation} \label{lem:reach}
\end{lemma}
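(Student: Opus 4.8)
The plan is to prove Lemma~\ref{lem:reach} by induction on $i$, tracking how the indicator $A_{t,i}$ evolves through the inner loop of Algorithm~\ref{alg:cap-p1}. The base case $i=1$ is immediate: by construction $A_{t,1}=1$, so the conditional probability of reaching reward $1$ is $1$, which matches the right-hand side since the empty sum over $j<1$ is zero. For the inductive step, I would fix a prefix $x\in\supi{\leq i-1}$ and condition on $A_{t,1}\wedge v^{(t)}_{\leq i-1}=x$. The key observation is that $A_{t,i}=1$ exactly when no reward among $1,\ldots,i-1$ was simultaneously active and prioritized in round $t$ — i.e., $A_{t,i}=A_{t,i-1}\cdot(1-Y_{t,i-1}Z_{t,i-1})$, and more generally $A_{t,i}=\prod_{j<i}(1-Y_{t,j}Z_{t,j})$ on the event $A_{t,1}=1$.

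Next I would compute the probability that reward $i-1$ was the one to ``stop'' the round, conditioned on reaching reward $i-1$ with history $x$. Given $A_{t,i-1}=1$ and $v^{(t)}_{\leq i-1}=x$, the variable $Y_{t,i-1}\sim Ber(\prm{x})$ depends only on the fresh sample of the yet-to-arrive rewards (independent of the $Z$'s and $A$'s), and $Z_{t,i-1}\sim Ber\!\left(\frac{1}{n-\sum_{j<i-1}\prm{v^{(t)}_{\leq j}}}\right)$ is independent of $Y_{t,i-1}$. Hence the probability that $Y_{t,i-1}=Z_{t,i-1}=1$ — which is precisely the event that flips $A_{t,i}$ to $0$ — equals $\prm{x}\cdot\frac{1}{n-\sum_{j<i-1}\prm{v^{(t)}_{\leq j}}}$. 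Writing $a := \Pr[A_{t,i-1}\mid A_{t,1}\wedge v^{(t)}_{\leq i-2}=x_{\leq i-2}] = 1-\sum_{j<i-1}\frac{\prm{v^{(t)}_{\leq j}}}{n}$ by the induction hypothesis, I then want to chain: $\Pr[A_{t,i}\mid A_{t,1}\wedge v^{(t)}_{\leq i-1}=x]$ equals $\Pr[A_{t,i-1}]\cdot\Pr[\text{not stopped at }i-1]$ plus the contribution when $A_{t,i-1}=0$ (which is zero since $A$ only decreases). The subtlety is that the conditioning event in the induction hypothesis is on the shorter prefix $x_{\leq i-2}$, while here we condition on the full $x$; I need that conditioning additionally on $v^{(t)}_{i-1}=x_{i-1}$ does not change the probability of $A_{t,i-1}$, which holds because $A_{t,i-1}$ is determined by $\{Y_{t,j},Z_{t,j}\}_{j<i-1}$ and the arrivals before reward $i-1$, and by the product structure of the OCRS sampling the realization $v_{i-1}^{(t)}$ is independent of those given $v^{(t)}_{\leq i-2}$.

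Carrying out the algebra, the claimed value follows:
\[
\Pr[A_{t,i}\mid A_{t,1}\wedge v^{(t)}_{\leq i-1}=x]
= a\cdot\left(1 - \frac{\prm{x}}{n-\sum_{j<i-1}\prm{v^{(t)}_{\leq j}}}\right)
= a - \frac{\prm{x}}{n},
\]
where the last equality uses $a = 1-\sum_{j<i-1}\frac{\prm{v^{(t)}_{\leq j}}}{n} = \frac{n-\sum_{j<i-1}\prm{v^{(t)}_{\leq j}}}{n}$, so the product telescopes exactly. Substituting $a$ gives $1-\sum_{j<i}\frac{\prm{v^{(t)}_{\leq j}}}{n}$, completing the induction.

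The main obstacle I anticipate is the bookkeeping around conditioning: making precise that $Y_{t,i-1}$ and $Z_{t,i-1}$ are conditionally independent given the history, that $Y_{t,i-1}$ has the stated Bernoulli parameter given $v^{(t)}_{\leq i-1}=x$ (this uses the interpretation of $Y$ as sampling the future of the copy, so its law depends on $x$ only through $\prm{x}$), and that enriching the conditioning from $x_{\leq i-2}$ to $x$ leaves the earlier $A_{t,i-1}$ probability unchanged. Once these independence facts are nailed down, the arithmetic is the short telescoping computation above; I would state the independence facts explicitly as a preliminary remark before running the induction.
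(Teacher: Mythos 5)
Your proposal is correct and follows essentially the same route as the paper's proof: induction on the index, decomposing $\Pr[A_{t,i}\mid A_{t,1}\wedge v^{(t)}_{\leq i-1}=x]$ as the probability of reaching $i-1$ times the probability of not stopping there, using the conditional independence of $Y_{t,i-1}$ and $Z_{t,i-1}$ and the fact that enriching the conditioning to the full prefix does not change the earlier $A$-probability, and then telescoping $a\cdot\bigl(1-\frac{\prm{x}}{n-\sum_{j<i-1}\prm{v^{(t)}_{\leq j}}}\bigr)=a-\frac{\prm{x}}{n}$. The only difference is cosmetic (you step from $i-1$ to $i$, the paper from $i$ to $i+1$), and you are somewhat more explicit about the conditioning subtleties that the paper handles inline.
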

\begin{proof}
We prove the lemma by induction on $i$.
The base case where $i=1$ holds trivially, as both sides of the equation are equal to $1$.
Assume that the equation holds whenever $j \leq  i$, now \begin{eqnarray*}
 \Pr[A_{t,i+1}\mid A_{t,1} \wedge v^{(t)}_{\leq i} =x] & = &  \Pr[A_{t,i+1}\mid A_{t,i} \wedge v^{(t)}_{\leq i} =x] \cdot \Pr[A_{t,i}\mid A_{t,1} \wedge v^{(t)}_{\leq i} =x]  \\
 &= & (1-\Pr[Y_{t,i} \wedge Z_{t,i} \mid  v^{(t)}_{\leq i} =x ]) \cdot \left(1- \sum_{j<i} \frac{\prm{v^{(t)}_{\leq j}}}{n}\right) \\ 
 & = & \left(1- \prm{v^{(t)}_{\leq i}} \cdot \left(\frac{1}{n-\sum_{j<i} \prm{v^{(t)}_{\leq j}}} \right) \right)\cdot \left(1- \sum_{j<i} \frac{\prm{v^{(t)}_{\leq j}}}{n}\right) \\ 
 & = & 1- \sum_{j \leq i} \frac{\prm{v^{(t)}_{\leq j}}}{n},
 \end{eqnarray*}
 where the first equality is since $A_{t,i+1}=1$ can only happen if $A_{t,i}=1$, and when conditioning on $A_{t,i}$, the event $A_{t,1}$ is redundant and can be omitted; the second equality is by the definition of the algorithm, by induction, and since $\Pr[A_{t,i}\mid A_{t,1} \wedge v^{(t)}_{\leq i} =x] = \Pr[A_{t,i}\mid A_{t,1} \wedge v^{(t)}_{\leq i-1} =x_{\leq i-1}]$; the third equality is by the definitions of $Y_{t,i}$, and $Z_{t,i}$.  
 This concludes the proof of the lemma.
\end{proof}

We next calculate the probability that Algorithm~\ref{alg:cap-p1} stops at each round.
\begin{lemma}
    The probability that Algorithm~\ref{alg:cap-p1} returns a value at each round $t \leq m_1$ given it arrived to round $t$ is exactly $\frac{q}{n}$. \label{lem:stop}
\end{lemma}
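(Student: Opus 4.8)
The plan is to compute the stopping probability at round $t$ of Algorithm~\ref{alg:cap-p1} by summing, over all rewards $i$ and all prefixes $x$ of length $i$, the probability that the algorithm reaches reward $i$ with that history, that reward $i$ is then the one selected by the OCRS (i.e.\ $Y_{t,i}=Z_{t,i}=1$), and that $x_i > T_0$. Conditioning on the realized sequence $v^{(t)}$ of round $t$ and using Lemma~\ref{lem:reach} for the reaching probability, I would write the conditional stopping probability given $v^{(t)}_{\leq i}=x$ as a product of three factors: the reaching factor $1-\sum_{j<i}\prm{x_{\leq j}}/n$ from Equation~\eqref{eq:ati}, the selection factor $\Pr[Y_{t,i}\wedge Z_{t,i}\mid v^{(t)}_{\leq i}=x] = \prm{x}\cdot\frac{1}{n-\sum_{j<i}\prm{x_{\leq j}}}$, and the indicator $\ind{x_i>T_0}$.

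The key cancellation is that the reaching factor and the denominator of the selection factor are designed to cancel: $\left(1-\sum_{j<i}\frac{\prm{x_{\leq j}}}{n}\right)\cdot\frac{1}{n-\sum_{j<i}\prm{x_{\leq j}}} = \frac{1}{n}$. Hence the conditional probability that the algorithm stops \emph{at reward $i$} with history $x$ equals $\frac{1}{n}\cdot\prm{x}\cdot\ind{x_i>T_0}$, independent of the partial sums. Next I would un-condition: multiply by $f(x)$ (the probability that the length-$i$ prefix of $v^{(t)}$ equals $x$) and sum over $i\in[n]$ and $x\in\supi{\leq i}$. The events ``stop at reward $i$'' for different $i$ are disjoint (once the algorithm returns it terminates), so the total stopping probability is $\sum_{i=1}^n\sum_{x\in\supi{\leq i}} f(x)\cdot\frac{\prm{x}}{n}\cdot\ind{x_i>T_0} = \frac{1}{n}\sum_{i=1}^n\sum_{x\in\supi{\leq i}} f(x)\cdot\prm{x}\cdot\ind{x_i>T_0}$, which is exactly $\frac{q}{n}$ by the definition of $q$ in Equation~\eqref{eq:q}. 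Since this holds identically for every round $t\leq m_1$ (the procedure is the same in each round and the fresh sample $v^{(t)}$ has the same law), the claim follows.

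One subtlety to be careful about: Lemma~\ref{lem:reach} conditions on $A_{t,1}$ (i.e.\ on having reached round $t$ with the selection still ``live''), and I want the stopping probability conditioned on reaching round $t$, which is the same event, so the conditioning is consistent — I should state this explicitly. I also need that $Y_{t,i}$ and $Z_{t,i}$ are conditionally independent given the history and that $Z_{t,i}$ is independent of the future realizations, which is immediate from the algorithm since they are drawn from explicit Bernoulli distributions depending only on $v^{(t)}_{\leq i}$; and that $\prm{x} = \Pr[x_i = \max_j v_j^{(t)}\mid v^{(t)}_{\leq i}=x]$ is precisely the success probability used to define $Y_{t,i}$, so $\Pr[Y_{t,i}=1\mid v^{(t)}_{\leq i}=x]=\prm{x}$.

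I do not anticipate a serious obstacle here — the lemma is essentially a bookkeeping computation whose punchline is the algebraic identity $\left(1-\sum_{j<i}\frac{\prm{x_{\leq j}}}{n}\right)\cdot\frac{1}{n-\sum_{j<i}\prm{x_{\leq j}}}=\frac1n$ baked into the definition of $Z_{t,i}$. The only place requiring care is making the conditioning precise (reaching round $t$ vs.\ $A_{t,1}$, and summing over disjoint ``first stop at $i$'' events rather than double-counting), and noting that the sum $\sum_i\sum_x f(x)\prm{x}\ind{x_i>T_0}$ matches the given closed form for $q$. I would then remark that the analogous identity \emph{without} the $\ind{x_i>T_0}$ factor shows the underlying scheme is a valid OCRS with selectability $\tfrac1n$, which motivates the construction, though only the $\frac{q}{n}$ statement is needed for Theorem~\ref{thm:cc}.
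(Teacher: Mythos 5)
Your proposal is correct and matches the paper's proof essentially step for step: both invoke Lemma~\ref{lem:reach} for the reaching probability, exploit the cancellation $\bigl(1-\sum_{j<i}\prm{x_{\leq j}}/n\bigr)\cdot\frac{1}{n-\sum_{j<i}\prm{x_{\leq j}}}=\frac{1}{n}$ built into the definition of $Z_{t,i}$, and sum the resulting $\frac{1}{n}\prm{x}\ind{x_i>T_0}$ over disjoint stopping events weighted by $f(x)$ to recover $\frac{q}{n}$ via Equation~\eqref{eq:q}. The extra remarks you make about conditioning consistency and independence of $v^{(t)}$ from $A_{t,1}$ are sound and merely make explicit what the paper leaves implicit.
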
 

\begin{proof}
    For each of those rounds, $A_{t,1}$ represents the event that we reached round $t$. 
    \begin{eqnarray}
\Pr[\neg  A_{t+1,1} \mid A_{t,1}] & = & \sum_{i=1}^n \sum_{x \in \supi{\leq i} } f(x) \cdot \Pr[A_{t,i} \cdot Y_{t,i} \cdot Z_{t,i} \mid A_{t,1}  \wedge v^{(t)}_{\leq i} = x ] \cdot \ind{x_i > T_0}       \nonumber \\ 
& \stackrel{\eqref{eq:ati}}{=} & \sum_{i=1}^n \sum_{x \in \supi{\leq i} } f(x) \left(1- \sum_{j < i} \frac{PM(v^{(t)}_{\leq j})}{n} \right) \cdot \prm{x} \cdot \left(\frac{1}{n-\sum_{j<i} \prm{v^{(t)}_{\leq j}}} \right)\cdot \ind{x_i > T_0}      \nonumber \\ 
& = &  \frac{1}{n}  \sum_{i=1}^n \sum_{x \in \supi{\leq i} } f(x)  \cdot \prm{x} \cdot \ind{x_i > T_0}      \stackrel{\eqref{eq:q}}{=} \frac{q}{n}.\label{eq:arrivet}
\end{eqnarray}
\end{proof}

We next show that if the algorithm stops at the first phase, then the probability that the returned value is at least $y$ is $\frac{q_y}{q}$.
\begin{lemma} \label{lem:qy}
For every $t\leq m_1$ it holds that:
\begin{equation}
    \Pr[\ALG(v)  \geq y \mid  \neg A_{t+1,1}  \wedge A_{t,1}] = \frac{q_y}{q}. \label{eq:t}
\end{equation}
\end{lemma}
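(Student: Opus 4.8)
\textbf{Proof proposal for Lemma~\ref{lem:qy}.} The plan is to mimic the computation of Lemma~\ref{lem:stop}, but now tracking the additional event that the returned value is at least $y$. Recall from Lemma~\ref{lem:stop} that, conditioned on reaching round $t$, the algorithm returns a value in round $t$ with probability exactly $\frac{q}{n}$, and the event ``$\neg A_{t+1,1}$'' is precisely the event that a value was returned in round $t$. So it suffices to compute $\Pr[\ALG(v)\ge y \wedge \neg A_{t+1,1} \mid A_{t,1}]$ and divide by $\frac{q}{n}$.

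First I would write out $\Pr[\ALG(v)\ge y \wedge \neg A_{t+1,1} \mid A_{t,1}]$ as a sum over the index $i$ at which the algorithm stops and over the history prefix $x\in\supi{\le i}$, exactly as in the displayed computation \eqref{eq:arrivet}, except that the indicator $\ind{x_i > T_0}$ is replaced by $\ind{x_i \ge y}$ (since we now additionally demand that the returned value $x_i$ is at least $y$, and because $y > T_0$ in Case~1, the constraint $x_i \ge y$ already implies $x_i > T_0$, so the stopping condition of Algorithm~\ref{alg:cap-p1} is still in force). Then I would substitute Lemma~\ref{lem:reach} (Equation~\eqref{eq:ati}) for $\Pr[A_{t,i}\mid A_{t,1}\wedge v^{(t)}_{\le i}=x]$, plug in the definitions of $Y_{t,i}$ and $Z_{t,i}$, and observe the same cancellation as in \eqref{eq:arrivet}: the factor $\bigl(1-\sum_{j<i}\frac{\prm{v^{(t)}_{\le j}}}{n}\bigr)$ multiplied by $\frac{1}{n-\sum_{j<i}\prm{v^{(t)}_{\le j}}}$ collapses to $\frac{1}{n}$. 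This yields
\[
\Pr[\ALG(v)\ge y \wedge \neg A_{t+1,1} \mid A_{t,1}] \;=\; \frac{1}{n}\sum_{i=1}^n\sum_{x\in\supi{\le i}} f(x)\cdot\prm{x}\cdot\ind{x_i\ge y} \;\stackrel{\eqref{eq:qx}}{=}\; \frac{q_y}{n}.
\]

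Finally I would combine this with Lemma~\ref{lem:stop}:
\[
\Pr[\ALG(v)\ge y\mid \neg A_{t+1,1}\wedge A_{t,1}] \;=\; \frac{\Pr[\ALG(v)\ge y\wedge\neg A_{t+1,1}\mid A_{t,1}]}{\Pr[\neg A_{t+1,1}\mid A_{t,1}]} \;=\; \frac{q_y/n}{q/n} \;=\; \frac{q_y}{q},
\]
which is the claim. (The division is legitimate because we are in the case $q>0$.) The main obstacle, such as it is, is purely bookkeeping: making sure that conditioning on $v^{(t)}_{\le i}=x$ interacts correctly with the independence of the fresh $Y_{t,i}, Z_{t,i}$ draws and with the recursive structure of $A_{t,i}$ — but this is exactly the same manipulation already validated in the proofs of Lemmas~\ref{lem:reach} and \ref{lem:stop}, so no genuinely new difficulty arises; one only needs to note that replacing $\ind{x_i>T_0}$ by $\ind{x_i\ge y}$ with $y>T_0$ does not disturb the stopping rule.
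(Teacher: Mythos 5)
Your proposal is correct and follows the same route as the paper: express the conditional probability as a ratio, compute the numerator by summing over the stopping index $i$ and history prefix $x$ with the indicator $\ind{x_i\ge y}$ in place of $\ind{x_i>T_0}$, apply Lemma~\ref{lem:reach} and the definitions of $Y_{t,i},Z_{t,i}$ to collapse the product to $\frac{1}{n}$, obtain $\frac{q_y}{n}$, and divide by the $\frac{q}{n}$ from Lemma~\ref{lem:stop}. Your observation that $y>T_0$ makes $\ind{x_i\ge y}$ already imply the stopping condition is exactly the needed bookkeeping point.
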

\begin{proof}    
It holds that:
\begin{eqnarray}
\Pr[\ALG(v) \geq y \mid  \neg A_{t+1,1}  \wedge A_{t,1}]  & = & \frac{\E[\ind{\ALG(v)\geq y}   \cdot \ind{\neg A_{t+1,1}}  \mid A_{t,1}]}{\Pr[\neg A_{t+1,1}  \mid A_{t,1}]} \nonumber \\ 
& \stackrel{\eqref{eq:arrivet}}{=} &  \frac{n}{q} \cdot \E[\ind{\ALG(v)\geq y}   \cdot \ind{\neg A_{t+1,1}}  \mid A_{t,1}] \nonumber \\ 
& = & \frac{n}{q}\sum_{i=1}^n \sum_{x \in \supi{\leq i} } f(x) \cdot \Pr[A_{t,i} \cdot Y_{t,i} \cdot Z_{t,i} \mid A_{t,1}  \wedge v^{(t)}_{\leq i} = x ] \cdot \ind{x_i \geq y}    \nonumber \\
& =& \frac{1}{q}  \sum_{i=1}^n \sum_{x \in \supi{\leq i} } f(x)  \cdot \prm{x} \cdot \ind{x_i \geq y}  \stackrel{\eqref{eq:qx}}{=} \frac{q_y}{q}, \nonumber
\end{eqnarray}
where the first equality is by definition of conditional probability; the third equality is by the law of total probability dividing into all cases where the algorithm selects reward $i$ given prefix $x$ up to $i$ at round $t$; the fourth equality is since the event of $A_{t,i},Y_{t,i},Z_{t,i}$ are independent given $A_{t,1},v_{\leq i }^{(t)}$ and by Equation~\eqref{eq:ati} and the definition of $Z_{t,i}$, $\Pr[A_{t,i}] \cdot Pr[Z_{t,i}] =\frac{1}{n}$, and $\Pr[Y_{t,i}] = \prm{x}$.
\end{proof}

We next prove Inequality~\eqref{eq:sd} for $y>T_0$. 
\begin{eqnarray*}
       \Pr_{v\sim F^k}[\ALG(v) \geq y] &\geq  &  \sum_{t=1}^{m_1}  \Pr[A_{t,1}] \cdot \Pr[\neg A_{t+1,1}   \mid A_{t,1}] \cdot \Pr[\ALG(v) \geq y \mid  \neg A_{t+1,1}  \wedge A_{t,1}]  \\ 
       & = & \sum_{t=1}^{m_1} \left(1-\frac{q}{n}\right)^{t-1} \cdot \frac{q}{n} \cdot \frac{q_y}{q} \\ 
       & = &  \frac{1-\left(1-\frac{q}{n}\right)^{m_1}}{1- \left(1-\frac{q}{n}\right)}  \cdot \frac{q_y}{n} \\ 
       & \geq & q_y =   \Pr_{v\sim F}[\max_i v_i  \geq y],
\end{eqnarray*}
where the first inequality is since we only count the cases where the algorithm stops at the first phase and returns a value of at least $y$; the first equality is by Lemmas~\ref{lem:stop} and \ref{lem:qy}; the second equality is by a geometric sum; the second inequality holds since $m_1=n+1$ and $q\leq \frac{1}{n}$.

\paragraph{Case 2: $T_r < y \leq T_0$.}

We next bound the probability that  Algorithm~\ref{alg:cap} reaches each round  of Algorithm~\ref{alg:cap-p2}.
\begin{lemma}\label{lem:bt1}
For  $t=3,\ldots,m_2$ it holds that   \begin{equation}
     \Pr[B_{t}=1]  \leq    \Pr[\max_i v_i \leq T_{t-2}]. \label{eq:a2n}
\end{equation}
\end{lemma}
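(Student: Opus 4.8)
The plan is to prove the bound by induction on $t$, tracking the probability that Algorithm~\ref{alg:cap-p2} has not yet returned a value by the start of round $t$. Recall that $B_t = 1$ is exactly the event that neither the first phase nor rounds $1,\ldots,t-1$ of the second phase returned a value. The threshold used in round $s$ of Algorithm~\ref{alg:cap-p2} is $T_{\max\{s-2,0\}}$, so rounds $1$ and $2$ both use threshold $T_0$, round $3$ uses $T_1$, and in general round $s$ for $s\ge 2$ uses $T_{s-2}$. The key point from Case~1 is that after the first phase, the distribution of the returned value stochastically dominates $F^*$ restricted above $T_0$ — more precisely, for every $y > T_0$ we showed $\Pr[\ALG(v)\ge y]\ge q_y$, and in particular the probability the first phase returns nothing is at most $\Pr[\max_i v_i \le T_0] = p$. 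That gives the base of the induction at the start of the second phase.

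First I would establish the base case. For $t=3$: the event $B_3=1$ requires that the first phase returned nothing (probability $\le p = \Pr[\max_i v_i \le T_0]$) and that rounds $1$ and $2$ of Algorithm~\ref{alg:cap-p2}, each with threshold $T_0$, also returned nothing. Here is where I need the ``use $T_0$ twice'' trick and the fact that Case~1 only covered values \emph{strictly} above $T_0$: I claim that conditioned on reaching round~$1$ of phase~2, the probability of passing through \emph{both} round~1 and round~2 without selecting is at most $\Pr[\max_i v_i < T_0]\cdot$ something, ultimately bounded by $\Pr[\max_i v_i \le T_1]$ since $T_1$ is a $p^{2}$-quantile of $F^*$ and $p = \Pr[\max_i v_i < T_0]$ (if $T_0$ sits on a point mass we need both passes to squeeze the mass at $T_0$ out). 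Concretely, in a single round with threshold $T$, the probability of not selecting any reward is exactly $\Pr[\max_i v_i < T]$ (a fresh independent copy, select iff some coordinate is $\ge T$, i.e.\ iff $\max \ge T$). So two rounds at threshold $T_0$ contribute a factor $\Pr[\max_i v_i < T_0]^2 = p^2$ when combined correctly with the fact that after phase~1 we've already ``used up'' the mass strictly above $T_0$; this yields $\Pr[B_3=1] \le p^2 \le \Pr[\max_i v_i \le T_1] = \Pr[\max_i v_i \le T_{3-2}]$, using that $T_1$ is a $p^{(2^1)}$-quantile threshold.

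Then I would do the inductive step. Assume $\Pr[B_t = 1] \le \Pr[\max_i v_i \le T_{t-2}] \le p^{2^{t-2}}$, where the last inequality is the definition of the quantile threshold $T_{t-2}$. In round $t$ of Algorithm~\ref{alg:cap-p2} we use threshold $T_{t-2}$ on a fresh independent copy, so the conditional probability of not selecting, given $B_t=1$, is exactly $\Pr_{v\sim F}[\max_i v_i < T_{t-2}] \le p^{2^{t-2}}$. Hence $\Pr[B_{t+1}=1] = \Pr[B_t=1]\cdot \Pr[\max_i v_i < T_{t-2}] \le p^{2^{t-2}}\cdot p^{2^{t-2}} = p^{2^{t-1}} \le \Pr[\max_i v_i \le T_{t-1}] = \Pr[\max_i v_i \le T_{(t+1)-2}]$, which closes the induction. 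The main obstacle — and the only subtle point — is the careful bookkeeping at the boundary between the phases: reconciling that phase~1 covered values \emph{strictly} above $T_0$ with a possible point mass exactly at $T_0$, which is precisely why $T_0$ is applied twice and why the statement of the lemma only starts at $t=3$ rather than $t=2$. Everything after that boundary is a clean doubly-exponential recursion driven by the squaring of quantiles.
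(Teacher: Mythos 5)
Your high-level plan — track the per-round non-selection probabilities, use the quantile definition of the $T_i$'s, and let the exponents double — is the right idea, and it is essentially the paper's argument (the paper writes the product $(1-q/n)^{m_1}\cdot\Pr[\max v<T_0]\cdot\prod_{u=2}^{t-1}\Pr[\max v<T_{u-2}]$ directly and bounds it by $p^{2^{t-2}}$, rather than phrasing it as an induction, but the content is the same).

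There is, however, a genuine sign error at the load-bearing step of your inductive step. You assume $\Pr[B_t=1]\le\Pr[\max_i v_i\le T_{t-2}]\le p^{2^{t-2}}$ and attribute the last inequality to ``the definition of the quantile threshold.'' But the quantile definition gives $\Pr[\max v< T_{t-2}]\le p^{2^{t-2}}\le\Pr[\max v\le T_{t-2}]$, i.e., the \emph{opposite} inequality: $\Pr[\max_i v_i\le T_{t-2}]\ge p^{2^{t-2}}$. So the induction hypothesis $\Pr[B_t=1]\le\Pr[\max_i v_i\le T_{t-2}]$ does not let you conclude $\Pr[B_t=1]\le p^{2^{t-2}}$, and the step $\Pr[B_{t+1}=1]=\Pr[B_t=1]\cdot\Pr[\max v<T_{t-2}]\le p^{2^{t-2}}\cdot p^{2^{t-2}}$ is unjustified. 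The fix is simple: induct on the \emph{stronger} invariant $\Pr[B_t=1]\le p^{2^{t-2}}$ (which your base case already establishes, since $\Pr[B_3=1]=(1-q/n)^{m_1}\cdot p^2\le p^2$), close the recursion $\Pr[B_{t+1}=1]\le p^{2^{t-2}}\cdot p^{2^{t-2}}=p^{2^{t-1}}$, and only at the very end use $p^{2^{t-2}}\le\Pr[\max_i v_i\le T_{t-2}]$ to get the lemma's statement. A secondary (non-load-bearing) slip: you assert the first phase fails to return with probability at most $p=\Pr[\max v\le T_0]$; in fact $p=\Pr[\max v<T_0]$ (strict), and the phase-1 non-selection probability is $(1-q/n)^{m_1}$, which is $\le 1$ but is not obviously $\le p$. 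Fortunately you never actually use this — the $p^2$ comes entirely from the two rounds at threshold $T_0$ — but the claim as written is incorrect and should be dropped.
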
 
\begin{proof}
     
Let $$\ell_t = \begin{cases}
     \max \{i \leq t-3 \mid T_i=T_{i-1}\} & \text{if } \{i \leq t-3 \mid T_i=T_{i-1}\} \neq \emptyset \\
    0 & \text{otherwise}
\end{cases}.$$ 
Then, it holds that:
\begin{eqnarray}    
    \Pr[B_{t}=1]   & =&   \underbrace{\left(1-\frac{q}{n}\right)^{m_1}}_{\text{Phase 1 }}  \cdot \underbrace{\Pr[\max_i v_i <T_{0}]}_{\text{Round } 1} \cdot  \prod_{u=2}^{t-1} \underbrace{\Pr[\max_i v_i <T_{u-2}]}_{\text{Round } u}    \nonumber \\  & \leq  & \Pr[\max_{i} v_i <T_{\ell_t}]^2 \cdot   \prod_{u=\ell_t+3}^{t-1} \Pr[\max_{i} v_i < T_{{u-2}}] \nonumber
      \\ & \leq  & \left(p^{-2^{\ell_t}}\right)^2 \cdot   \prod_{u=\ell_t+3}^{t-1}\left(p^{-2^{u-2}}\right) 
       = p^{-2^{t-2}} \leq    \Pr[\max_i v_i \leq T_{t-2}], \nonumber
\end{eqnarray}
where the first equality is by partitioning into the rounds of the first phase which by Lemma~\ref{lem:stop} each round stops with a probability of $\frac{q}{n}$, and to rounds of the second phase which use threshold $T_0$ at round 1, and in rounds $u=2,\ldots,t-1$ use threshold $T_{u-2}$; the first inequality is by removing some of the probabilities, and since threshold $T_{\ell_t}$ appears at least twice; the second inequality is by definition of the thresholds; the second equality is since $2\cdot 2^{\ell_t}+ \sum_{u=\ell_t+3}^{t-1} 2^{u-2} = 2^{t-2}$;
the last inequality is again by the definition of the thresholds. 
\end{proof}

Let $t^*= \max \{ t \mid T_t \geq y\}$. Note that $t^*$ is well defined since $y\leq T_0$, and it holds that $t^* < r$ since $y>T_r$.
We are now ready to prove Inequality~\eqref{eq:sd} for this case:
\begin{eqnarray*}
      \Pr_{v\sim F^k}[\ALG(v) \geq y] &\geq  & 1- \Pr[B_{t*+3}] \geq 1-  \Pr[\max_i v_i \leq T_{t^*+1}] \geq 1-  \Pr[\max_i v_i < y] =  \Pr_{v\sim F}[\max_i v_i  \geq y],
\end{eqnarray*}
where the first inequality is since we only select values above $y$ before round $t^*+3$; the second inequality is by Lemma~\ref{lem:bt1} which we can use since $t^*+3 \leq m_2$; the third inequality holds as by definition of $t^*$, we have that $T_{t^*+1}<y$.

\paragraph{Case 3: $y=T_r$.}
Our algorithm only selects values that are at least $T_r$. The probability of selecting a value is at least the probability of selecting a value on the last round which is exactly $q_y$ which concludes the proof. 

Theorem~\ref{thm:cc} then holds since we proved Inequality~\eqref{eq:sd} for every value of $y\geq T_r$, and since $\Pr_{v\sim F}[\max_i v_i  < T_r] \leq \varepsilon$.

\section{The Competition Complexity under Adversarial Orders}

In this section, we analyze settings beyond the block model, where the $n\cdot k$ rewards may arrive in an adversarial order.
Our main result in this section is a tight competition complexity for the adversarial arrival order.

\begin{theorem}
The $(1-\epsilon)$-competition complexity for the adversarial arrival order is $\Theta\left(\nicefrac{n}{\epsilon}\right)$. \label{thm:cc-adversary}
\end{theorem}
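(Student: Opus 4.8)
My plan is to prove the upper bound $O(n/\epsilon)$ and the lower bound $\Omega(n/\epsilon)$ separately, following the technique sketches in the introduction.

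For the \emph{upper bound}, let $P = \E_{v\sim F}[\max_i v_i]$ denote the prophet's value and set the threshold $T = (1-\epsilon)P$. The key observation is a union-bound: since $\max_i v_i \le \sum_i v_i \cdot \ind{v_i = \max_j v_j}$, summing the expected contributions of each coordinate above $T$ must recover at least the portion of $P$ coming from realizations where the max exceeds $T$; more precisely, I will show $\sum_{i=1}^n \E_{v\sim F}[v_i \cdot \ind{v_i \ge T, \ v_i = \max_j v_j}] \ge P - T = \epsilon P$. Hence there exists a coordinate $i^\star$ with $\E[v_{i^\star}\cdot \ind{v_{i^\star}\ge T,\ v_{i^\star}=\max_j v_j}] \ge \epsilon P / n$. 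The algorithm now ignores every reward except the copies of coordinate $i^\star$, and runs the standard single-threshold prophet-inequality strategy (a fixed threshold) on these $k$ i.i.d.\ draws of $v_{i^\star}$. Using the surplus/residual-value analysis of \citet{KleinbergW19}: with threshold $T$, the expected value collected is at least $T\cdot\Pr[\text{some copy clears }T] + (\text{expected overshoot})$, and iterating the "each copy clears $T$ with some probability $\ge \epsilon/n$ (roughly)" bound shows that after $k = O(n/\epsilon \cdot \log(1/\epsilon))$ — or more carefully $k = O(n/\epsilon)$ — rounds the algorithm's expected value is at least $(1-\epsilon)P$. I need to be careful here: the cleanest route is to observe that the probability a single copy of $v_{i^\star}$ exceeds $T$ is at least $\E[v_{i^\star}\ind{\cdots}]/\E[v_{i^\star}\mid v_{i^\star}\ge T,\ldots]$, which need not be $\Omega(\epsilon/n)$ if $v_{i^\star}$ is heavy-tailed; so instead I will directly apply the Kleinberg–Weinberg bound which says that a threshold-$T$ algorithm on $k$ i.i.d.\ copies collects at least $\bigl(1-(1-\alpha)^{k}\bigr)$-type guarantees, where $\alpha$ relates to $\Pr[v_{i^\star}>T]$, combined with the fact that $\E[v_{i^\star}\ind{v_{i^\star}>T,\ldots}]\ge \epsilon P/n$ already lower-bounds the per-round gain when a copy clears the threshold. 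Setting $k = \Theta(n/\epsilon)$ makes the loss term $\le \epsilon P$.

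For the \emph{lower bound}, I will take the hard instance from Proposition~\ref{prop:hard1} (the \citet{hill1983stop} construction, which forces competition complexity $\Omega(n)$ in a single block and whose value from any algorithm on $k$ copies is at most $k$ times the single-copy value), and augment it with one extra deterministic reward $d$ chosen so that $d$ is a $(1-\epsilon)$-fraction-ish of the prophet but not quite enough — specifically $d$ is large enough that adding it as a guaranteed option makes the total prophet value roughly $d/(1-\Theta(\epsilon))$, so that the randomized rewards contribute only about $\epsilon/n$ of the prophet per coordinate. The adversarial arrival order is arranged so that the deterministic reward $d$ arrives \emph{after} all $nk$ randomized rewards (or interleaved so that committing to anything other than $d$ forfeits $d$): an algorithm either takes $d$ — getting only $d < (1-\epsilon)P$ — or, by the structure of the Hill–Kertz instance, collects at most one meaningful reward per copy, so after $k$ copies it gets at most $\approx k\cdot(\epsilon P/n) + d$; forcing this to reach $(1-\epsilon)P$ requires $k = \Omega(n/\epsilon)$.

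\textbf{Main obstacle.} The delicate part is the upper-bound analysis: turning "there is a coordinate whose clipped expected contribution is $\ge \epsilon P/n$" into "a threshold-$T$ algorithm on $k=O(n/\epsilon)$ i.i.d.\ copies recovers $(1-\epsilon)P$" when the relevant distribution can be heavy-tailed (a small probability of a huge value). The resolution is that the Kleinberg–Weinberg surplus argument is exactly designed for this: it bounds the loss by the "residual" $\E[(v_{i^\star}-T)^+]$ times the probability of not having stopped, and $\E[(v_{i^\star}-T)^+ \ind{v_{i^\star}=\max}]$ is itself controlled by $\epsilon P /n$ plus $T\Pr[\cdots]$, so a geometric decay in the not-yet-stopped probability over $\Theta(n/\epsilon)$ rounds closes the gap. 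For the lower bound, the subtlety is choosing $d$ and the arrival order precisely so the two escape routes for the algorithm (take $d$ early, or wait) are both blocked below $(1-\epsilon)P$ simultaneously; this is where combining Proposition~\ref{prop:hard1}'s "$\le k\times$ single-copy value" bound with the deterministic reward requires care.
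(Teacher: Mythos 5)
Your overall plan (upper bound via a union bound to find a good coordinate plus Kleinberg--Weinberg surplus analysis; lower bound via the Hill-type geometric instance augmented with a deterministic reward) matches the paper's proof. The lower bound sketch is on target: the paper's instance takes $v_1\equiv 1$ deterministic and $v_2,\dots,v_n$ geometric with scale chosen so the prophet exceeds $1/(1-\epsilon)$, arrives type-by-type, and argues indifference for types $i<n$, giving $k\ge(n-1)/(3\epsilon)$.

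However, the upper bound has a genuine gap in the choice of union-bound quantity. You propose selecting $i^\star=\arg\max_i \E_{v\sim F}\bigl[v_i\cdot\ind{v_i\ge T,\ v_i=\max_j v_j}\bigr]$, but the Kleinberg--Weinberg surplus analysis for a threshold-$T$ algorithm on i.i.d.\ copies of $v_{i^\star}$ bounds the value by $T\Pr[\text{stop}]+k(1-\Pr[\text{stop}])\cdot\E[(v_{i^\star}-T)^+]$, so the per-round gain that matters is the \emph{residual} $R_{i^\star}=\E[(v_{i^\star}-T)^+]$, not your quantity. These can differ dramatically. A concrete failure mode: with $n=2$, let $v_1=T$ with probability $0.9$ and $0$ otherwise, and independently $v_2=10$ with probability $0.01$ and $0$ otherwise (then $T=(1-\epsilon)\E[\max]\approx 0.92$). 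Your decomposition picks $i^\star=1$ because $\E[v_1\ind{v_1\ge T, v_1=\max}]\approx 0.82$ dominates, yet $\E[(v_1-T)^+]=0$, so the threshold-$T$ algorithm on copies of $v_1$ yields $\E[\ALG]=T(1-0.1^k)<T=(1-\epsilon)\E[\max]$ for every finite $k$. Meanwhile $\E[(v_2-T)^+]\approx 0.09$ is what you actually need, and the paper's choice $R_i=\E[(v_i-T)^+]$ correctly picks $i^\star=2$. The fix is exactly the paper's cleaner decomposition: $\sum_i\E[(v_i-T)^+]\ge\E\bigl[\max_i v_i - T\bigr]=\epsilon P$ (pointwise $\sum_i(v_i-T)^+\ge\max_i(v_i-T)^+\ge\max_i v_i-T$), so $\max_i R_i\ge\epsilon P/n$, and this quantity plugs directly into the surplus analysis, giving $\E[\ALG]\ge pT+k(1-p)R_{i^\star}\ge pT+(1-p)P\ge T$ once $k\ge n/\epsilon$. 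Your instinct that heavy tails rule out a naive ``probability of clearing $T$'' argument was right, but the remedy is to base both the union bound and the per-round gain on the same residual $R_i$, not on the max-indicator quantity.
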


We next present our algorithm for the adversarial order case (see Algorithm~\ref{alg:cap-adv}).

\begin{algorithm}
\caption{Competition Complexity of Correlated Prophet Inequality - Adversarial Order}\label{alg:cap-adv}
\begin{algorithmic}[1]
\Require{$\epsilon$ - target approximation}

\State Let $ OPT=\E_{v\sim F}[\max_{i} v_i]$ \State For every $i$, let $R_i= E_{v\sim F} [(v_i-(1-\epsilon)OPT)^+]$ 
\State Let $i^* =\arg\max_{i} R_i$
\State Select the first reward that is of type $i^*$, and has a value of at least $(1-\epsilon)OPT$
\end{algorithmic}
\end{algorithm}

We next bound the competition complexity of Algorithm~\ref{alg:cap-adv}.
\begin{lemma}\label{lem:adver-pos}
    For every instance with $n$ rewards defined by  distribution $F$, for every $k\geq \frac{n}{\epsilon}$, Algorithm~\ref{alg:cap-adv} on  $n\cdot k$ rewards guarantees for every arrival order that $$ \E[\ALG(v)] \geq  (1-\epsilon)\cdot \E[\max_i v_i].$$
\end{lemma}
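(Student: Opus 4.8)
The plan is to analyze Algorithm~\ref{alg:cap-adv} by reducing the correlated multi-reward instance to a sequence of independent draws of a single ``good'' reward type and then invoking the standard surplus (median/revenue) analysis of the classical prophet inequality. First I would set $T = (1-\epsilon)\opt$ and observe that the prophet's value decomposes as
\[
\opt = \E_{v\sim F}[\max_i v_i] \le T + \E_{v\sim F}\Big[\big(\max_i v_i - T\big)^+\Big] \le T + \sum_{i=1}^n \E_{v\sim F}\big[(v_i - T)^+\big] = T + \sum_{i=1}^n R_i,
\]
where the second inequality is a union bound over which coordinate attains the maximum. Since $i^* = \arg\max_i R_i$, this gives $\sum_i R_i \le n R_{i^*}$, hence $R_{i^*} \ge \frac{1}{n}(\opt - T) = \frac{\epsilon}{n}\opt$.

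Next I would focus on the marginal distribution $D$ of the single coordinate $v_{i^*}$ and on the $k$ independent copies of it that arrive among the $nk$ rewards (the algorithm ignores all other rewards, so the adversarial interleaving is irrelevant — only the relative order of the type-$i^*$ rewards matters, and the algorithm takes the first one exceeding $T$). Let $X_1,\ldots,X_k$ be i.i.d.\ $\sim D$. The algorithm returns $X_\tau$ where $\tau$ is the first index with $X_\tau \ge T$ (and $0$ if no such index exists). The standard single-threshold analysis gives
\[
\E[\ALG(v)] \ge T\cdot \Pr[\exists j:\, X_j \ge T] + \sum_{j=1}^k \Pr[\text{no }X_{<j}\ge T]\cdot \E[(X_j - T)^+].
\]
Writing $p = \Pr_{X\sim D}[X \ge T]$, the first term is $T(1-(1-p)^k)$ and, since $\E[(X_j-T)^+] = R_{i^*}$ and $\Pr[\text{no }X_{<j}\ge T] = (1-p)^{j-1} \ge (1-p)^{k-1}$ for $j \le k$... actually more cleanly one bounds $\sum_{j=1}^k (1-p)^{j-1} R_{i^*} = R_{i^*}\cdot \frac{1-(1-p)^k}{p}$, so
\[
\E[\ALG(v)] \ge (1-(1-p)^k)\Big(T + \frac{R_{i^*}}{p}\Big) \ge (1-(1-p)^k)\big(T + R_{i^*}\big),
\]
using $p \le 1$.

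Finally I would combine the pieces: $T + R_{i^*} \ge T + \frac{\epsilon}{n}\opt \ge (1-\epsilon)\opt + \frac{\epsilon}{n}\opt \ge (1-\epsilon)\opt = T$, so it remains to control $(1-(1-p)^k)$. If $p \ge \frac{\epsilon}{n}$ then $(1-p)^k \le e^{-pk} \le e^{-\epsilon k/n} \le e^{-1} $ is not quite enough; instead I would argue via the surplus term directly — if $p$ is tiny the term $R_{i^*}/p$ blows up to compensate. Concretely, since $R_{i^*} = \E[(X-T)^+] \le \E[X\cdot \ind{X\ge T}]$ and also $R_{i^*}\ge \frac{\epsilon}{n}\opt$ while every realized value is... the cleanest route: split on whether $p \ge \epsilon/n$ or not. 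If $p\ge \epsilon/n$, then $(1-p)^k \le (1-\epsilon/n)^{n/\epsilon}\le 1/e$, and $\E[\ALG]\ge (1-1/e)(T+R_{i^*})$, which must be checked to exceed $(1-\epsilon)\opt$ — this needs $R_{i^*}$ not too small relative to $\opt$, which may fail, so I expect the genuinely correct argument keeps the $R_{i^*}/p$ factor: $\E[\ALG] \ge (1-(1-p)^k)T + (1-(1-p)^k)\frac{R_{i^*}}{p}$ and one uses $(1-(1-p)^k)\ge pk\cdot$(something) for small $pk$ or $(1-(1-p)^k)\ge 1-1/e$ for $pk\ge 1$, in either regime getting $(1-(1-p)^k)\frac{R_{i^*}}{p} \ge \min\{k R_{i^*}(1-1/e),\ (1-1/e)R_{i^*}/p\}$, and since $R_{i^*}\ge \frac{\epsilon}{n}\opt$ and $k\ge n/\epsilon$ the first branch gives $\ge (1-1/e)\opt \ge (1-\epsilon)\opt$ directly (absorbing the lost $T$-mass). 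The main obstacle I anticipate is precisely this bookkeeping: ensuring the surplus collected over $k$ copies, with selection probability possibly far below $1$ per copy, accumulates to cover the $\epsilon\opt$ gap — i.e.\ getting the constants and the case split (large vs.\ small $p$) to line up so the bound is exactly $(1-\epsilon)\opt$ rather than $(1-\Theta(1))\opt$. I would resolve it by the identity $\E[\ALG] \ge (1-(1-p)^k)(T + R_{i^*}/p)$ and then noting $(1-(1-p)^k)/p \ge $ is increasing in the relevant range together with $k\ge n/\epsilon$, $R_{i^*}\ge (\epsilon/n)\opt$, and $T=(1-\epsilon)\opt$.
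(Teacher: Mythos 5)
Your proposal follows the same overall route as the paper's proof: you isolate the best coordinate $i^*$, establish $R_{i^*}\ge\frac{\epsilon}{n}\opt$ by averaging (your union-bound step is the same computation written differently), and then run a Kleinberg--Weinberg style surplus analysis on the $k$ i.i.d.\ copies of coordinate $i^*$, noting correctly that the adversarial interleaving is irrelevant since the algorithm ignores all other coordinates. Those pieces are fine.

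The gap is in closing the final inequality, and you yourself flag it. You carry the exact geometric sum $\sum_{j=1}^k(1-p)^{j-1}=\frac{1-(1-p)^k}{p}$, and then you are left needing $(1-(1-p)^k)T+\frac{1-(1-p)^k}{p}R_{i^*}\ge T$, which you try to finish via ``$(1-(1-p)^k)/p$ is increasing'' --- but that function is \emph{decreasing} in $p$ (from $k$ at $p\to0$ down to $1$ at $p=1$), so the stated monotonicity argument does not go through, and your earlier attempts (dropping to $T+R_{i^*}$ via $p\le1$, or case-splitting on $p\gtrless\epsilon/n$ with a lossy $1-1/e$) either lose too much or do not terminate in $(1-\epsilon)\opt$. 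The exact-sum route \emph{can} be closed --- one needs $1-(1-p)^k\ge kp(1-p)^k$, which follows from $1-x^k=(1-x)\sum_{j<k}x^j\ge k(1-x)x^{k-1}$ with $x=1-p$ --- but you did not supply this step. The paper sidesteps the issue entirely: letting $P$ denote the overall selection probability, it lower-bounds $\Pr[\text{reach copy }j]\ge 1-P$ for \emph{every} $j\le k$ (since reaching copy $j$ is implied by never selecting), giving the surplus term $k(1-P)R_{i^*}\ge(1-P)\opt$; combined with the revenue term $P\cdot(1-\epsilon)\opt$ this yields $(1-\epsilon)\opt$ immediately, with no geometric sum and no case split. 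So: same approach, same key estimate $R_{i^*}\ge\frac{\epsilon}{n}\opt$, but your bookkeeping is not closed, and the missing simplification (bound every reach-probability uniformly by the no-selection probability) is exactly what makes the paper's proof one line.
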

\begin{proof}
    We first show that $R_{i^*} \geq \frac{\epsilon\cdot OPT}{n}$.
    This follows since 
    \begin{eqnarray}    
    R_{i^*}  & \geq & \frac{1}{n}  \sum_{i=1}^n R_i = \frac{1}{n}\sum_{i=1}^n  \E_{v_i\sim F_i} [(v_i-(1-\epsilon)OPT)^+] =\frac{1}{n}  \E_{v\sim F} \left[\sum_{i=1}^n(v_i-(1-\epsilon)OPT)^+\right] \nonumber \\ & \geq  & \frac{1}{n} \E_{v\sim F}  \left[\max_{i}(v_i-(1-\epsilon)OPT)^+\right] \geq  \frac{1}{n} \E_{v\sim F}  \left[\max_{i}(v_i-(1-\epsilon)OPT)\right] \nonumber \\ & = & \frac{1}{n}[(OPT-(1-\epsilon)OPT)]   =  \frac{\epsilon\cdot OPT}{n}. \label{eq:ri}
    \end{eqnarray}
    Next, we analyze the algorithm's performance. Since the algorithm only selects among rewards of type $i^*$, and the different copies of rewards of type $i^*$ have independent values, we can follow the analysis of \cite{KleinbergW19}. Let $p=\Pr[\ALG \mbox{ selects a value}]$. Then 
    \begin{eqnarray}
        \E[\ALG(v)] & = & p\cdot (1-\epsilon)OPT + \sum_{j=1}^k \Pr[\ALG \mbox{ reached the $j{\mbox{th}}$ of } i^*] \cdot \E_{X\sim F_{i^*}} [(X-(1-\epsilon)OPT)^+] \nonumber \\
        & \geq & p\cdot (1-\epsilon)OPT + \sum_{j=1}^k (1-p) \cdot R_{i^*} \stackrel{\eqref{eq:ri}}{\geq} p\cdot (1-\epsilon)OPT +  \frac{n}{\epsilon} \cdot (1-p) \frac{\epsilon \cdot OPT}{n} \nonumber \\ &\geq & (1-\epsilon)OPT, \nonumber
    \end{eqnarray}
    which concludes the proof.
\end{proof}

We next show that no algorithm can achieve better competition complexity asymptotically.
\begin{lemma}\label{lem:adver-neg}
    For every $n\geq 2$ and $\epsilon\in [0,0.5)$ there exists  an instance with $n$ rewards defined by a distribution $F$, such that for every $k < \frac{n-1}{3\epsilon}$, there exists an arrival order over the $n\cdot k$ rewards, such that for every algorithm $\ALG$: $$ \E[\ALG(v)] <  (1-\epsilon)\cdot \E[\max_i v_i].$$
\end{lemma}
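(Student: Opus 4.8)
The plan is to construct an explicit hard instance by augmenting the correlated hard instance of Hill--Kertz / \citet{hill1983stop} (the same one underlying Proposition~\ref{prop:hard1}) with one extra deterministic reward, and then to pick an adversarial arrival order that forces any algorithm into a dilemma. Concretely, I would take $n-1$ correlated rewards forming the standard instance on which no online algorithm gets more than roughly a $\frac{1}{n-1}$ fraction of $\E[\max]$, scaled so that $\E[\max_{i\le n-1} v_i] = 1$, and add an $n$-th reward equal to a deterministic constant $c$ chosen so that $c$ is close to but strictly below $\E[\max_i v_i]$ of the whole instance. Since $\max_i v_i = \max\{c,\max_{i\le n-1}v_i\}$, tuning $c$ makes the prophet value essentially $c + (\text{small residual})$, while the expected surplus $\E[(v_i - (1-\epsilon)\E[\max])^+]$ of each randomized reward is pushed down to $\Theta(\epsilon/n)$ — this is the quantitative heart of why $n/\epsilon$ copies are needed, mirroring the surplus bound $R_{i^*}\le \E[\max]/n$ used in the positive direction.

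Next I would specify the arrival order: in each copy, the deterministic reward of type $n$ arrives first, and then the $n-1$ correlated rewards arrive in the order dictated by the Hill instance; alternatively, put all $k$ deterministic copies at the very end. The point is that the algorithm faces a clean choice at every step: either accept a deterministic reward of value $c < (1-\epsilon)\E[\max]$ (which immediately loses, since then $\E[\ALG] \le c < (1-\epsilon)\E[\max]$), or decline all deterministic rewards and only ever collect from the correlated blocks. Conditioned on the latter strategy, within each of the $k$ copies the algorithm's expected gain above the threshold $(1-\epsilon)\E[\max]$ is at most the per-copy surplus, which is $O(\epsilon/n)$ times $\E[\max]$ by the same computation as in Equation~\eqref{eq:ri} applied to this instance. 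Summing over $k < \frac{n-1}{3\epsilon}$ copies and adding back the base $(1-\epsilon)\E[\max]$ that the algorithm can always ``lock in'' gives a total strictly below $(1-\epsilon)\E[\max]$ — I would carry the constants to check the $\frac{1}{3}$ factor works for $\epsilon < 1/2$.

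The main obstacle is making the dichotomy rigorous: an algorithm need not commit in advance to ``deterministic or not'', and it can adaptively decide after seeing some correlated values whether to fall back on a later deterministic reward. To handle this I would argue via a stopping-time / prefix decomposition: partition the run of $\ALG$ according to the first copy (if any) in which it accepts a correlated reward; before that copy it has only seen deterministic rewards and prefixes of correlated blocks, and the Hill instance is designed so that the expected reward collected from any single correlated block by any online rule is at most $\frac{1}{n-1}\E[\max_{i\le n-1}v_i]$ plus the threshold contribution. A cleaner route, which I would try first, is to bound $\E[\ALG(v)]$ directly: write $\E[\ALG(v)] = \Pr[\text{accept deterministic}]\cdot c + \sum_j \Pr[\ALG\text{ reaches and accepts in correlated block }j]\cdot(\text{conditional value})$, bound each correlated-block contribution above by $(1-\epsilon)\E[\max]\cdot\Pr[\text{reach block }j] + R^{\text{corr}}$ where $R^{\text{corr}} = \E[(\max_{i\le n-1}v_i - (1-\epsilon)\E[\max])^+] = O(\epsilon/n)\cdot\E[\max]$ by the tuning of $c$, and observe that the total probability of acceptance is at most $1$. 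Summing, $\E[\ALG(v)] \le (1-\epsilon)\E[\max] + k\cdot O(\epsilon/n)\cdot \E[\max] < (1-\epsilon)\E[\max]$ whenever $k < \frac{n-1}{3\epsilon}$, provided the $O(\cdot)$ constant and the deficit from $c$ being strictly below the prophet are accounted for carefully — that bookkeeping is the one genuinely delicate computation, and it is where I would spend the most effort to nail the exact constant $\frac{1}{3}$.
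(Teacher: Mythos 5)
Your construction is in the right spirit---you correctly identified that the hard instance should combine a deterministic reward with a Hill-style correlated geometric structure, and this is essentially what the paper does (the paper sets $F_1$ deterministically equal to $1$ and $F_2,\dots,F_n$ to a geometric chain). However, the arrival order you propose and the analysis you sketch diverge from the paper's, and the analysis as written has a genuine gap.

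On the arrival order: you propose grouping rewards by copy (deterministic first, then the $n-1$ correlated rewards of that copy), or placing all $k$ deterministic rewards at the very end. The paper instead sorts by \emph{type}: all $k$ type-$1$ rewards arrive first, then all $k$ type-$2$ rewards, and so on. This particular order is what makes the argument clean. Against it, the optimal algorithm can be taken deterministic, and at any non-zero type-$i$ reward ($i<n$) the expected value of accepting, $\frac{3M^{i-1}\epsilon}{n-1}$, exactly equals the expected value of declining and later taking the type-$(i+1)$ reward of the \emph{same copy} (which is $\frac{3M^{i}\epsilon}{n-1}$ with conditional probability $\frac{1}{M}$). Hence declining is weakly dominant, so an optimal algorithm that avoids the deterministic $1$ rejects everything until type $n$ and gets at most $k\cdot\frac{3\epsilon}{n-1}$, while $\E[\max_i v_i] > \frac{1}{1-\epsilon}$ is arranged so that the deterministic $1$ is also too small. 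Your per-copy ordering does not yield this indifference-across-types structure, and it is unclear the same backward induction goes through.

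On the quantitative argument: the surplus-decomposition bound you write, $\E[\ALG(v)] \le (1-\epsilon)\E[\max] + k\cdot O(\epsilon/n)\cdot\E[\max]$, can never be strictly less than $(1-\epsilon)\E[\max]$, so it cannot by itself establish the lemma's conclusion. You flag this yourself (``provided \dots the deficit from $c$ being strictly below the prophet are accounted for carefully''), but that accounting is precisely the missing piece: you would additionally need to bound $\Pr[\ALG \text{ accepts a value above } (1-\epsilon)\E[\max]]$ strictly away from $1$ and trade that off against the gap between $c$ and the threshold, which is a different and more delicate calculation than the paper's. As sketched, the argument does not close, and I would not count on the surplus-revenue machinery from the positive direction (Lemma~\ref{lem:adver-pos}) transferring to the negative direction without a substantially different bookkeeping.
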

\begin{proof}
    Let $M= \frac{n}{\epsilon}$.
    Consider the instance where $F_1$ is deterministically $1$ and $F_i$ for $i\geq 2$ is $\frac{3M^{i-1}\epsilon}{n-1}$ with probability $\frac{1}{M^{i-1}}$ and $0$ otherwise. The correlation is such that $v_i$ can only be non-zero if $v_{i-1}$ is non-zero.
    The value of the prophet for such an instance is equal to \begin{eqnarray}    
    \E_{v\sim F}[\max_{i} v_i ]  & = & 1 \cdot\left(1-\frac{1}{M}\right)+\sum_{i=2}^{n-1} \frac{3M^{i-1}\epsilon}{n-1} \left(\frac{1}{M^{i-1}}  -\frac{1}{M^{i}} \right) + \frac{3M^{n-1}\epsilon}{n-1}\cdot \frac{1}{M^{n-1}} \nonumber \\ & = & 1-\frac{1}{M} +\frac{n-2}{n-1} \cdot 3\epsilon \left(1-\frac{1}{M}\right) +\frac{3\epsilon}{n-1} > \frac{1}{1-\epsilon} .\label{eq:pro}
    \end{eqnarray}
    Now, consider an instance with $k$ copies of all the types, where the rewards arrive in an order according to their type (breaking ties arbitrarily).
    Since the order is fixed, then the optimal algorithm can be deterministic without loss of generality. The algorithm cannot select a value of $1$ since it is too low (because of Equation~\eqref{eq:pro}).
    It holds that for every type $i<n$, the expected value from accepting a copy of $i$ with a value of $\frac{3M^{i-1}\epsilon}{n-1}$ is equal to the expected value of discarding it and selecting the next reward of the same copy (that has a value of $\frac{3M^i\epsilon}{n-1}$ with a conditional probability of $\frac{1}{M}$). Thus, an optimal algorithm that does not select a value of $1$ is to reject all rewards of type $i<n$.
    Such an algorithm achieves an expected value of $$ \E[\ALG(v)] = \frac{3M^{n-1}\epsilon}{n-1} \left(1-\left(1-\frac{1}{M^{n-1}}\right)^k\right) \leq k\cdot \frac{3\epsilon}{n-1}.$$
    Thus, in order to get at least $(1-\epsilon)$ of $\E_{v\sim F}[\max_i v_i]$, $k$ must be at least $\frac{n-1}{3\epsilon}$.
\end{proof}
\section{Competition Complexity of the Pairwise Independent Case}
\label{sec:cci}
In this section, we present how to adapt our algorithm to the case of pairwise independent prophet inequality which leads to a simple algorithm (and analysis) with optimal asymptotic competition complexity. In particular, since the pairwise independent case generalizes the independent case considered in \cite{DBLP:journals/corr/abs-2402-11084} the following result presents an optimal asymptotic competition complexity for the independent case with a simpler analysis.
\begin{theorem}\label{thm:cc-pairwise}
Let $\xi = \frac{3-\sqrt{5}}{2}$.
For the pairwise independent prophet inequality case, Algorithm~\ref{alg:cap-ind} achieves a competition complexity $k(n,\epsilon)$ of $\lceil\log\log_{(1+\xi)}{\frac{1}{\varepsilon}}\rceil +4$.    
\end{theorem}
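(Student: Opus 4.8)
\textbf{Proof plan for Theorem~\ref{thm:cc-pairwise}.}

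The plan is to mirror the two-phase structure of Algorithm~\ref{alg:cap} but to exploit pairwise independence to drastically shorten Phase~1. Recall that in the fully correlated case, Phase~1 used an OCRS with selectability $\tfrac1n$, forcing $m_1 = n+1$ rounds just to cover the contribution of values above the $(1-\tfrac1n)$-quantile $T_0$ of $F^*$. With pairwise independence, I would instead pick $T_0$ to be a \emph{constant}-quantile threshold of $F^*$ (say the $(1-\xi)$-quantile, or whatever quantile makes the bookkeeping with $\xi = \tfrac{3-\sqrt5}{2}$ work out), and invoke the threshold-algorithm observations of \citet{caragiannis2021relaxing}: for pairwise independent coordinates, running the single threshold $T_0$ on one block already captures a constant fraction of $\E[(\max_i v_i - T_0)^+]$ and, crucially, stops on each round (given it reached that round) with a constant probability while returning a value that stochastically dominates $F^*$ conditioned on exceeding $T_0$ — up to the usual constant loss. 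So I would first state and prove (or cite) a lemma: applying the $T_0$-threshold on two fresh blocks covers the full contribution of $F^*$ above $T_0$, i.e. for every $y > T_0$, $\Pr[\ALG \text{ returns a value} \ge y \text{ in Phase 1}] \ge \Pr[\max_i v_i \ge y]$. This is the analogue of Case~1 in the proof of Theorem~\ref{thm:cc}, and it is where the $m_1 = 2$ comes from.

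The second step is Phase~2, which is essentially unchanged from Algorithm~\ref{alg:cap-p2}: set $p = \Pr[\max_j v_j < T_0]$, which is now a constant bounded away from $1$ (namely $p \le 1-\xi$ or similar, by the choice of $T_0$), and run doubly-exponentially shrinking quantile thresholds $T_i$ being a $p^{2^i}$-quantile of $F^*$. The analogue of Lemma~\ref{lem:bt1} gives that the probability of reaching round $t$ of Phase~2 without having selected is at most $\Pr[\max_i v_i \le T_{t-2}] \le p^{2^{t-2}}$. Since $p$ is a constant, after $r' = \lceil \log\log_{1/p}(1/\epsilon)\rceil$ rounds we have $p^{2^{r'}} \le \epsilon$, so $F^*$ is stochastically dominated up to the $\epsilon$-quantile. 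Because $p$ is a constant and not $1-\tfrac1n$, there is no $\log n$ term: the total round count is $m_1 + m_2 = 2 + (r' + O(1))$, and tracking the $+2$ slack from reusing $T_0$ twice (to handle a possible point mass at $T_0$, exactly as in Algorithm~\ref{alg:cap-p2}) plus the two Phase-1 rounds yields $\lceil \log\log_{1+\xi}(1/\epsilon)\rceil + 4$. The three-case split on $y$ ($y > T_0$; $T_{r'} < y \le T_0$; $y = T_{r'}$) carries over verbatim from the proof of Theorem~\ref{thm:cc}, with Phase~1's new lemma replacing Lemmas~\ref{lem:stop}--\ref{lem:qy}.

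The main obstacle, and the only genuinely new content, is the Phase~1 lemma: proving that for pairwise independent coordinates a single constant-quantile threshold $T_0$, applied on two blocks, fully covers the contribution of $\max_i v_i$ above $T_0$ in the stochastic-dominance-up-to-a-quantile sense. In the fully correlated case the OCRS was needed precisely because, conditioned on stopping, one needs the returned value to be distributed like $\max_i v_i \mid \max_i v_i > T_0$; a naive threshold returns the \emph{first} coordinate exceeding $T_0$, which need not be the maximum and need not have the right conditional law. The pairwise-independence fix — and this is exactly what the \citet{caragiannis2021relaxing} observations buy us — is that when $T_0$ is high enough (a constant quantile with the constant chosen via $\xi$) at most one coordinate exceeds $T_0$ with probability close enough to $1$ that, using pairwise independence to union-bound the ``two coordinates both exceed $y' \ge T_0$'' events against the ``$\max > T_0$'' event, one shows that the per-round stopping-and-exceeding-$y$ probability is at least a $\tfrac12$-type fraction of $\Pr[\max_i v_i \ge y]$, uniformly in $y > T_0$; two rounds then geometrically sum this fraction past $1$. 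I would set up this inequality carefully — it is the pairwise-independent analogue of the Case~1 geometric-sum computation in the proof of Theorem~\ref{thm:cc} — pin down the exact constant so that $1+\xi$ is the right base and $m_1 = 2$ suffices, and the rest of the proof is routine.
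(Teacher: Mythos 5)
Your plan matches the paper's structure: Phase~1 becomes a two-round single-threshold rule replacing the $(n+1)$-round OCRS, justified via pairwise independence and a union bound showing that each round (given it is reached) returns a value $\ge y$ with probability at least $(1-\xi)\cdot\Pr[\max_i v_i\ge y]$ uniformly over $y>T_0$; Phase~2 is Algorithm~\ref{alg:cap-p2} unchanged, with $p$ now a constant bounded away from~$1$, and Cases~2--3 of the analysis carry over verbatim. The Case-1 computation in the paper is $\Pr[\ALG(v)\ge y]\ge(2-q)(1-\xi)q_y\ge(2-\xi)(1-\xi)q_y=q_y$, and $\xi=\frac{3-\sqrt5}{2}$ is precisely the root of $(2-\xi)(1-\xi)=1$; this is the exact form of the ``two rounds geometrically sum past $1$'' step you sketch, though the per-round fraction is $1-\xi\approx0.38$, not $\frac12$.

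The one concrete gap is your definition of $T_0$. The paper does \emph{not} set $T_0$ to a quantile of $F^*$; it sets it so that the \emph{sum of marginal} exceedance probabilities crosses $\xi$, i.e.\ $\sum_i\Pr[v_i>T_0]\le\xi\le\sum_i\Pr[v_i\ge T_0]$. This matters in two places. First, the Phase-1 lemma (Lemma~\ref{lem:pairwise-reach}) needs $\sum_{j\ne i}\Pr[v_j>T_0]\le\xi$ after pairwise independence drops the conditioning on $v_i=x$; knowing only $\Pr[\max_i v_i>T_0]\le\xi$ gives no control on that sum, since the union bound runs the wrong way. Concretely, at the $(1-\xi)$-quantile of $F^*$ the marginal sum can be as large as $\xi/(1-\xi)=\frac{\sqrt5-1}{2}\approx0.618$, for which $(2-q)(1-\text{sum})<1$ and the two-round bound fails. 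Second, the bound $p\le\frac{1}{1+\xi}$ (hence the base $1+\xi$ in the round count, rather than the $1/(1-\xi)$ your ``$p\le 1-\xi$'' would give) is obtained by feeding $\sum_i\Pr[v_i\ge T_0]\ge\xi$ into Lemma~\ref{lem:caragi}. So the ``whatever quantile makes the bookkeeping work out'' is in fact a threshold on the sum of marginals, not on $F^*$; once you pin that down, the rest of your plan goes through exactly as you describe.
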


\begin{algorithm}
\caption{Competition Complexity of Pairwise Independent Prophet Inequality}\label{alg:cap-ind}
\begin{algorithmic}[1]
\Require{$\epsilon$ - target approximation}

\State Let $\xi = \frac{3-\sqrt{5}}{2}$
\State Let $T_0$ be a threshold such that  $ \sum_i \Pr[v_i > T_0] \leq  \xi \leq\sum_i \Pr[v_i \geq T_0]$ \label{step:t0}
\For{ round $t=1, 2$} \Comment{Phase 1}
\For{$i=1,\ldots, n$}
\State Let $A_{t,i} = 1$

\If{$v_i^{(t)} > T_0$}
\State Return $v_i^{(t)}$
\EndIf
\EndFor
\EndFor
\If{no value was selected}
\State Apply Algorithm~\ref{alg:cap-p2} with parameters $m_2= \lceil \log\log_{(1+\xi)}\frac{1}{\epsilon}\rceil +2$ and initial threshold $T_0$ \Comment{Phase 2} \label{step:call2}
\EndIf 
\end{algorithmic}
\end{algorithm}
For simplicity, we denote the value of  $\lceil \log\log_{(1+\xi)}{\frac{1}{\varepsilon}} \rceil  $ by $r$. Therefore, the algorithm runs for $r+4$ rounds, that is, the competition complexity is $k(n,\varepsilon) = r+4$ (i.e., it does not depend on $n$). 
In the first phase, we apply for $2$ rounds (instead of $n+1$ rounds) a threshold strategy (instead of the OCRS) with threshold $T_0$ (observe that $T_0$ of Algorithm~\ref{alg:cap-ind} is a different quantile threshold than the one used in Algorithm~\ref{alg:cap}).
We show that setting a threshold of $T_0$ in the pairwise independent case serves a similar purpose to the purpose of the OCRS and covers the contribution of values above 
$T_0$. The second phase covers the contributions of values between the initial threshold $T_0$ and the last threshold $T_r$.

Our analysis mimics the analysis of Algorithm~\ref{alg:cap}, in particular our goal is to show that 
\begin{equation}
    \Pr_{v\sim F}[\max_i v_i  < T_r] \leq \varepsilon , \label{eq:eps}
\end{equation} and that  for every value $y\geq T_r$, Inequality~\eqref{eq:sd} holds. 
To show Inequality~\eqref{eq:eps}, we use the following lemma of \citet{caragiannis2021relaxing}.
\begin{restatable}[\citep{caragiannis2021relaxing}]{lem}{caragiannisold}
\label{lem:caragi}
Let $E_1,\ldots,E_n$ be a set of random events. Let $F$ be the joint pairwise independent distribution of them. Let $p_i=\Pr[E_i]$. Then, $$ \Pr\left[\bigvee_{i} E_i\right] \geq \frac{\sum_{i=1}^n p_i}{1+\sum_{i=1}^n p_i}.$$
\end{restatable}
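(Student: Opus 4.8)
The plan is to run the second moment method on the counting random variable $X = \sum_{i=1}^n \ind{E_i}$, whose positivity is exactly the event $\bigvee_i E_i$. First I would record the first moment: by linearity, $\E[X] = \sum_{i=1}^n p_i =: \mu$. Next I would compute the second moment, and this is the only place pairwise independence enters: expanding $X^2 = \sum_i \ind{E_i} + \sum_{i\neq j} \ind{E_i}\ind{E_j}$ and using $\ind{E_i}^2 = \ind{E_i}$ together with $\Pr[E_i\wedge E_j] = p_i p_j$ for $i \neq j$, we get
\[
\E[X^2] = \mu + \sum_{i \neq j} p_i p_j = \mu + \Big(\sum_i p_i\Big)^2 - \sum_i p_i^2 \le \mu + \mu^2.
\]

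Then I would apply the Cauchy--Schwarz inequality in the form $\E[X]^2 = \E[X \cdot \ind{X \ge 1}]^2 \le \E[X^2]\cdot \E[\ind{X\ge 1}^2] = \E[X^2]\cdot \Pr[X \ge 1]$, where the first equality holds because $X$ vanishes on the complement of $\{X \ge 1\}$. Combining with the two moment computations gives
\[
\mu^2 \le (\mu + \mu^2)\cdot \Pr[X \ge 1],
\]
and, assuming $\mu > 0$ (the case $\mu = 0$ being trivial since then the bound reads $0 \ge 0$), dividing by $\mu + \mu^2 > 0$ yields $\Pr[X \ge 1] \ge \frac{\mu^2}{\mu + \mu^2} = \frac{\mu}{1+\mu} = \frac{\sum_i p_i}{1 + \sum_i p_i}$. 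Since $\Pr[\bigvee_i E_i] = \Pr[X \ge 1]$, this is exactly the claimed inequality.

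There is essentially no serious obstacle here: the argument is a textbook Paley--Zygmund/second-moment estimate, and the full strength of mutual independence is not needed because the second moment only involves pairwise correlations. The only points requiring a line of care are (i) justifying $\E[X \ind{X\ge 1}] = \E[X]$ before invoking Cauchy--Schwarz, and (ii) the degenerate case $\mu = 0$ (equivalently, all $p_i = 0$), which I would dispatch in one sentence. I would also remark that the bound is tight, attained for instance by a pairwise-independent family on a small probability space, to match the phrasing in \citet{caragiannis2021relaxing}.
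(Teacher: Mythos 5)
Your proof is correct, and in fact the paper does not reprove this lemma at all: it states it as a known result cited directly from \citet{caragiannis2021relaxing}, so there is no in-paper argument to compare against. What you have written is the standard second-moment (Chung--Erd\H{o}s/Paley--Zygmund) derivation, which is precisely how this bound is obtained in the cited work: $\E[X]=\mu$, $\E[X^2]=\mu+\sum_{i\neq j}p_ip_j\le\mu+\mu^2$ under pairwise independence, and Cauchy--Schwarz applied to $\E[X\,\ind{X\ge 1}]$ gives $\Pr[X\ge 1]\ge \mu/(1+\mu)$. Your two caveats (the identity $\E[X\,\ind{X\ge 1}]=\E[X]$, which holds because $X\ge 0$ and $X=0$ off $\{X\ge 1\}$, and the degenerate case $\mu=0$) are the only points needing care and you handle both. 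So this is correct and supplies a self-contained proof for a statement the paper leaves as an external citation.
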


Let $p = \Pr[\max_j v_j < T_0 ]$ as defined in Step~\ref{step:p} in Algorithm~\ref{alg:cap-p2} when called in Step~\ref{step:call2} of Algorithm~\ref{alg:cap-ind}.
Then by the definition of $T_0$ in Step~\ref{step:t0} of Algorithm~\ref{alg:cap-ind} and by Lemma~\ref{lem:caragi} it holds that $$1-p = \Pr[\max_j v_j \geq T_0 ]\geq \frac{\sum_{j=1}^n \Pr[v_j\geq T_0]}{1+\sum_{j=1}^n \Pr[v_j \geq T_0]} \geq  \frac{\xi}{1+\xi}, $$
where in the last inequality we used that the function $\frac{x}{1+x}$ is monotonically increasing.
Therefore, $p \leq 1-\frac{\xi}{1+\xi} = \frac{1}{1+\xi}$
which by combining with the value of $r$, implies that 
$$ \Pr[ \max_i v_i < T_{r}]  \leq p^{2^r} \leq \left(\frac{1}{1+\xi}\right)^{2^r} \leq  \varepsilon.$$

To prove the guarantee of Algorithm~\ref{alg:cap-ind}, we need to prove Inequality~\eqref{eq:sd}  for every value of $y\geq T_r$.
We use the notation of $q$, $q_y$, from the proof of Section~\ref{sec:ccc} (now $q$ is defined with respect to the new value of $T_0$).
We again divide into the same cases, and we note that the proofs of the cases where $ T_r \leq y\leq T_0$  (cases 2, 3) still hold.

We thus assume in the remainder of this section that $y>T_0$, and that $q>0$.

To analyze the performance of Algorithm~\ref{alg:cap-ind}, we analyze the first $2$ rounds of the algorithm. The first step of our analysis follows the analysis of \cite{caragiannis2021relaxing}. 

We define $\mathcal{E}^{t}_{i,x}$ to be the event that $v_i^{(t)}=x;$ and for all $  j \neq i , v_j^{(t)} \leq T_0$. 
We next use the following lemma which was implicitly proved in \citep{caragiannis2021relaxing}. We prove the lemma in Appendix~\ref{app:missing} for completeness.

\begin{restatable}[\citep{caragiannis2021relaxing}]{lem}{caragiannis}
For every, $t\in \{1,2\}$, $i\in [n]$, and $x>T_0$, it holds that $$\Pr[\mathcal{E}^{t}_{i,x}] \geq \left(1-\xi\right)\cdot \Pr_{v\sim F}[v_i=x].$$ \label{lem:pairwise-reach}
\end{restatable}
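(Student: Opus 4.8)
\textbf{Proof proposal for Lemma~\ref{lem:pairwise-reach}.}
The plan is to lower bound $\Pr[\mathcal{E}^t_{i,x}]$ by conditioning on $v_i^{(t)}=x$ and analyzing the probability that all other coordinates stay below $T_0$. Concretely, since $\mathcal{E}^t_{i,x}$ is the intersection of $\{v_i^{(t)}=x\}$ with the event $\bigwedge_{j\ne i}\{v_j^{(t)}\le T_0\}$, we write
\[
\Pr[\mathcal{E}^t_{i,x}] = \Pr_{v\sim F}[v_i=x]\cdot \Pr_{v\sim F}\Bigl[\bigwedge_{j\ne i} v_j \le T_0 \;\Big|\; v_i=x\Bigr],
\]
so it suffices to show the conditional probability is at least $1-\xi$. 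The natural tool is a union bound inside the conditioning: the conditional probability equals $1-\Pr[\exists j\ne i: v_j>T_0 \mid v_i=x] \ge 1-\sum_{j\ne i}\Pr[v_j>T_0\mid v_i=x]$. By pairwise independence, conditioning on $v_i=x$ does not change the marginal of any single $v_j$ with $j\ne i$, so $\Pr[v_j>T_0\mid v_i=x]=\Pr[v_j>T_0]$. Hence the conditional probability is at least $1-\sum_{j\ne i}\Pr[v_j>T_0] \ge 1-\sum_{j}\Pr[v_j>T_0] \ge 1-\xi$, where the last step uses the definition of $T_0$ in Step~\ref{step:t0} of Algorithm~\ref{alg:cap-ind}. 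Combining with the displayed identity gives the claim, uniformly over $t\in\{1,2\}$ since each round is a fresh sample from $F$.

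The one subtlety I would be careful about is the direction of the inequalities at the endpoint $x=T_0$ versus $x>T_0$: the lemma is stated for $x>T_0$, and the event $\mathcal{E}^t_{i,x}$ asks the \emph{other} coordinates to be $\le T_0$ while the chosen coordinate is strictly above, which is exactly what the first phase of Algorithm~\ref{alg:cap-ind} selects (it returns $v_i^{(t)}$ when $v_i^{(t)}>T_0$). So I must use $\Pr[v_j>T_0]$ (strict) in the union bound and the left inequality $\sum_i \Pr[v_i>T_0]\le \xi$ from the definition of $T_0$, not the right one. This is the only place where the strict/non-strict distinction matters, and it lines up correctly.

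I expect this to be entirely routine — the main "obstacle" is really just bookkeeping the conditioning correctly and invoking pairwise independence at the right granularity (single-coordinate marginals, which is all pairwise independence gives us, and all we need since the union bound has already reduced the problem to single events). Since the statement is attributed to \citep{caragiannis2021relaxing} and only restated here for completeness, the proof in the appendix should be short and follow exactly this outline.
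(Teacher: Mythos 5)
Your proposal is correct and follows essentially the same route as the paper's proof in Appendix A: factor out $\Pr[v_i=x]$, apply a union bound over the other coordinates, invoke pairwise independence to drop the conditioning on $v_i=x$ in each single-coordinate event, and finish with the strict-inequality side of the definition of $T_0$. The remark about the strict versus non-strict threshold at $T_0$ is a sensible bookkeeping check and is consistent with what the paper does implicitly.
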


We next prove Inequality~\eqref{eq:sd} for values $y$ satisfying $y>T_0$.
To do so, we note that if round $t$ for $t=1,2$ was reached ($A_{t,1}$) and some event $\mathcal{E}_{i,x}^t$ holds for some $i$ and for $x\geq y$, then our algorithm selects a value of at least $y$. Moreover, for each $t\in\{1,2\}$, the events $\{\mathcal{E}_{i,x}^t\}_{i\in[n],x \geq y}$ are disjoint since $y>T_0$. We also note that the probability that we reach the second round satisfies that 
$$ \Pr[A_{2,1}] = (1-q),$$ and that we have symmetry between the first two rounds. Thus
\begin{eqnarray*}
      \Pr[\ALG(v) \geq y] &\geq&  \sum_{t=1}^2\sum_{i=1}^n\sum_{x  \geq y} \Pr[\mathcal{E}^{t}_{i,x} \wedge A_{t,1}] 
      \\&=& \left(2-q\right) \sum_{i=1}^n\sum_{x \geq y} \Pr[\mathcal{E}^1_{i,x}]
      \\&\geq& \left(2-q\right)\left(1-\xi\right) \sum_{i=1}^n\sum_{x \geq y}\Pr_{v\sim F}[v_i=x]
      \\&\geq& \left(2-\xi\right)\left(1-\xi\right) q_y
      = q_y,
\end{eqnarray*}
where the first inequality is since if $\mathcal{E}^{t}_{i,x} \wedge A_{t,1}$ holds for $x \geq y$, we select this value, and those events are disjoint; the first equality is since the probability of reaching the second round is $1-q$, and by the symmetry between the rounds given that we reach them;  the second inequality is by Lemma~\ref{lem:pairwise-reach}; the third inequality is since $\xi \geq q$, and by the union bound; the last equality is by the value of $\xi$.

This concludes the proof of Inequality~\eqref{eq:sd} for values of $y>T_0$, which concludes the proof of Theorem~\ref{thm:cc-pairwise}.

\section{Hardness Results}
\label{sec:hardness}
In this section, we present two hardness results for the competition complexity of correlated prophet inequality.
First, we show that a linear dependence on $n$ is unavoidable. Combining this with the hardness result of \cite{DBLP:conf/sigecom/BrustleCDV22} implies that Theorem~\ref{thm:cc} is asymptotically tight.
\begin{proposition}
For every $0 \leq \varepsilon < 1$, and for every $n$
there is an instance of correlated prophet inequality with $n$ rewards for which the $(1-\varepsilon)$-competition complexity of any algorithm,  $k(n,\varepsilon)$ is  at least $ n - n\cdot \varepsilon-1$. \label{prop:hard1}
\end{proposition}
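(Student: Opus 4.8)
The plan is to exhibit, for every $n$ and every $\varepsilon \in [0,1)$, a single explicit correlated instance on which no online algorithm can collect a $(1-\varepsilon)$-fraction of the prophet's expected value unless it is allowed at least roughly $n$ copies. The natural candidate is the classical hard instance of \citet{hill1983stop} for the correlated prophet inequality: a ``rising threshold'' chain in which $v_i$ takes a value of order $n^{i-1}$ with probability $n^{-(i-1)}$ and is $0$ otherwise, with the correlation structure that $v_i$ can be nonzero only when $v_{i-1}$ is nonzero. For this instance each coordinate contributes (unconditionally) about $1$ to the expectation of the maximum, so $\E[\max_i v_i] = \Theta(n)$, while on a single copy any online rule — seeing the realized nonzero prefix — is indifferent between stopping now and continuing, so no online algorithm beats the value of a single coordinate, namely $\Theta(1)$, i.e.\ only a $\Theta(1/n)$-fraction of the prophet.

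The heart of the argument, exactly as sketched in the techniques section, is to upper bound what an algorithm running on $k$ independent copies can extract, in terms of what it can extract from one copy. The key lemma I would prove is: for this instance, $\E_{v \sim F^k}[\ALG(v)] \le k \cdot \E_{v \sim F}[\text{best online on one copy}]$. The cleanest way to see this is a ``charging/hybrid'' or potential argument: since the algorithm ultimately picks exactly one reward, condition on which copy $t$ that reward lies in; the expected contribution of a pick inside copy $t$, averaged over the randomness of copy $t$ and of the (independent) future copies, is at most the optimal single-copy online value $\mathrm{OPT}_1$, because the continuation option ``skip the rest and move to a fresh copy'' is available and the fresh copies are i.i.d.\ — so the algorithm's value restricted to stopping in copy $t$ is dominated by the single-copy online optimum. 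Summing over $t \in [k]$ gives the bound $k\cdot \mathrm{OPT}_1$. One has to be slightly careful that $\mathrm{OPT}_1$ for this instance is a small constant (I will compute it is at most, say, $\frac{3}{n-1}\cdot$(scaling) or simply normalize so that it equals $1$ while $\E[\max] \ge (n - n\varepsilon)\cdot 1 /(1-\varepsilon)$-ish), and then the inequality $k \cdot 1 \ge (1-\varepsilon)\E[\max_i v_i]$ forces $k \ge (1-\varepsilon)\,\E[\max_i v_i] \ge n - n\varepsilon - 1$ after plugging in the computed value of $\E[\max_i v_i]$ (which is $n$ up to an additive constant, hence the ``$-1$'').

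Concretely the steps, in order, are: (1) fix the instance (a chain with geometrically shrinking success probabilities and geometrically growing values, normalized so the best single-copy online value is $1$); (2) compute $\E_{v\sim F}[\max_i v_i]$ and show it is at least $n-1$ (each of the $n$ coordinates contributes at least $1$ to the max in the regions where it is the maximum, minus a boundary loss); (3) verify that on one copy the best online algorithm gets exactly $1$ — here I would argue by backward induction that at every nonzero prefix the algorithm is indifferent, so stopping immediately at coordinate $1$ (value $1$) is optimal; (4) prove the multiplicativity lemma $\E_{v\sim F^k}[\ALG(v)] \le k$; (5) combine: if $k = k(n,\varepsilon)$ suffices then $k \ge (1-\varepsilon)\E[\max_i v_i] \ge (1-\varepsilon)(n-1) \ge n - n\varepsilon - 1$. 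The main obstacle I anticipate is step (4): making the ``value from stopping in copy $t$ is at most $\mathrm{OPT}_1$'' argument fully rigorous requires care about conditioning (the algorithm's decision to be in copy $t$ at all depends on earlier copies, but \emph{given} that it reaches copy $t$ the future is fresh, so the within-copy continuation value is at most $\mathrm{OPT}_1$ regardless of history), so I would phrase it via the optional-stopping/exchange argument on the restricted stopping time $\tau \wedge$ (end of copy $t$). Everything else is routine geometric-series bookkeeping.
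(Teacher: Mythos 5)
Your proposal is correct and follows essentially the same route as the paper: a Hill-type chain instance where $v_i$ can be nonzero only if $v_{i-1}$ is, chosen so that every online algorithm on one copy is indifferent at every nonzero prefix and hence gets at most $1$, while $\E[\max_i v_i]\approx n$, combined with the bound $\E_{v\sim F^k}[\ALG(v)]\le k\cdot\mathrm{OPT}_1$. The only cosmetic difference is the geometric ratio (you use $1/n$, the paper uses $\varepsilon/n$, giving $\E[\max]=n-\varepsilon+\varepsilon/n$ rather than $n-1+1/n$), and you spell out the ``stop-in-block-$t$'' charging argument that the paper states without elaboration; both yield $k\ge n-n\varepsilon-1$.
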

\begin{proof}
   Let $\delta= \frac{\varepsilon}{n}$.
   Consider an instance with $n$ rewards, where reward $i=1,\ldots,n$ is equal to $\frac{1}{\delta^i}$ with probability $\delta^i$ and $0$ otherwise. The correlation between the rewards is such that a reward can be non-zero only if all previous rewards are realized to be non-zero. In other words $v_1 =\frac{1}{\delta}$ with probability $\delta$, and for $i>1$, $v_i=\frac{1}{\delta^i}$ if $v_{i-1} = \frac{1}{\delta^{i-1}} $ with probability $\delta$ (and otherwise $v_i=0$).
   The value of the prophet in this instance is $$ \E[\max_i v_i] = \sum_{i=1}^{n-1} \frac{1}{\delta^i} \cdot (\delta^i - \delta^{i+1}) + \frac{1}{\delta^n} \cdot \delta^n = n-(n-1)\cdot \delta = n -\varepsilon + \frac{\varepsilon}{n} .$$
   On the other hand, on a single copy no online algorithm can receive an expected value of more than $1$. This can be shown by the standard backward induction argument on this instance, where one can observe that given any history of length at most $n-1$, the expected value of the algorithm if selecting the current value is equal to the expected value of the strategy that does not select the current value.  Thus, the expectation of every strategy is at most the expectation of the first reward which is 1.

   Now, an algorithm that uses a competition complexity of at most $k$ cannot achieve a value of more than $k$ times the value of a single copy.  Thus, in order to achieve a $(1-\varepsilon)$-competition complexity, $k(n,\varepsilon) \geq   (1-\varepsilon)\cdot (n-\varepsilon+\frac{\varepsilon}{n})   \geq  
 n\cdot (1-\varepsilon) -1$.  
\end{proof}

    The former proposition shows that the first term of our bound on the competition complexity (Theorem~\ref{alg:cap}) is not only tight asymptotically, but also tight in terms of the constant. Proposition~\ref{prop:hard1} basically shows that in order to cover the contribution of values in the $\epsilon$-upper quantile, one needs at least $n-o(n)$ rounds. This observation leads us to design the first phase of Algorithm~\ref{alg:cap}, which covers the contribution of the upper quantile in the first $n+1$ rounds.
    We next show, that also our second part of the analysis is tight up to the exact constant. In particular, once an algorithm covers the contribution of values in the upper quantile (which corresponds to the end of Phase~1 of Algorithm~\ref{alg:cap}), one needs $(1-o(1))\cdot \log\log\frac{1}{\epsilon}$ additional rounds in order to $(1-\epsilon)$-stochastically dominate $F^*$.

\begin{proposition}
    Let $\ALG$ be some algorithm that selects a value in the first $m_1$ rounds with probability at most $\frac{1}{2}$. Then in order for $\ALG$ to $(1-\epsilon)$-stochastically dominate $F^*$, $\ALG$ must use at least $m_1+ \log\log \frac{1}{4\epsilon}$ rounds in total.  
    \label{prop:tight}
\end{proposition}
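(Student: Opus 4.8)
The plan is to construct a hard instance that forces any algorithm which selects with probability at most $\frac12$ in the first $m_1$ rounds to need $\log\log\frac{1}{4\epsilon}$ further rounds before it can $(1-\epsilon)$-stochastically dominate $F^*$. The natural building block is a single-reward distribution (so $n=1$) whose maximum CDF has doubly-exponentially thin tails near its top, mirroring the doubly-exponential threshold schedule used in the positive direction of Algorithm~\ref{alg:cap-p2}. Concretely, I would take $F^* = F$ to be supported on a geometric-type ladder of values $a_0 < a_1 < a_2 < \cdots$ with $\Pr[v \ge a_j] = p^{2^{j}}$ for a suitable constant $p \in (\frac12,1)$ (say $p=\frac12$, adjusting constants), so that covering the contribution above $a_j$ requires the algorithm to have stopping probability at least $1-p^{2^{j}}$ by the time it reaches the corresponding round. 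Because the instance has a single reward per round, within each round the algorithm's only choice is a threshold, and the value taken is exactly determined; this removes all the correlation bookkeeping and lets me reason purely about stopping probabilities per round.

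The key steps, in order: (1) Fix the instance above and verify $\Pr_{v\sim F}[\max_i v_i < T_r] \le \epsilon$ forces, via the $(1-\epsilon)$-stochastic-domination requirement applied at the threshold $y$ just above $a_j$, that $\Pr[\ALG \text{ stops and takes a value} \ge a_j] \ge \Pr[v \ge a_j]=p^{2^j}$ cannot be the binding constraint — rather the binding constraint is at the \emph{low} end: to $(1-\epsilon)$-stochastically dominate, for every $y$ with $\Pr[v\ge y] > \epsilon$ we need $\Pr[\ALG(v)\ge y]\ge \Pr[v\ge y]$. (2) Observe that in round $t$ the algorithm, conditioned on not having stopped, can raise its probability of eventually accepting a value $\ge a_j$ only by accepting in round $t$ a realized value $\ge a_j$; since a single round realizes a value $\ge a_j$ with probability exactly $p^{2^j}$, the "mass above $a_j$" accumulated across rounds $m_1+1,\dots,m_1+s$ is at most $1-(1-p^{2^j})\cdot(\text{prob. of surviving phase 1})\cdots$ — more precisely, bound the shortfall: if the algorithm has used only $s$ rounds past round $m_1$, then for the smallest ladder value $a_j$ that it must cover (the one with $\Pr[v\ge a_j]$ just above $\epsilon$, i.e. $j \approx \log\log\frac{1}{\epsilon}$), the probability it has accepted something $\ge a_j$ is at most $1 - \tfrac12\,(1-p^{2^j})^{s}$ using that phase-1 stopping probability is $\le \frac12$ and each subsequent round independently fails to realize a value $\ge a_j$ with probability $1-p^{2^j}$ unless a threshold below $a_j$ was already crossed (handle that case by noting such a crossing can only have happened after accumulating enough rounds at higher ladder levels). (3) Combine: to make this $\ge 1-\epsilon = 1 - p^{2^j}$ one needs $s \gtrsim \log\log\frac{1}{\epsilon}$; tracking constants with $p=\frac12$ and the factor $\frac12$ from phase 1 yields the stated $m_1 + \log\log\frac{1}{4\epsilon}$.

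The main obstacle I anticipate is step (2): an arbitrary algorithm is \emph{not} restricted to threshold strategies once we allow it to remember history, and — even though the per-round distribution is just a single random variable — the algorithm could in principle "save up" by rejecting moderate values early and only accepting high ones late, or vice versa, so I must argue that the \emph{best} such strategy is a monotone sequence of ladder thresholds, one new ladder level per round, exactly as in Algorithm~\ref{alg:cap-p2}. This is a backward-induction / exchange argument: I would show that any algorithm's round-$t$ decision is without loss of generality a threshold, that thresholds are without loss of generality ladder values, and that a threshold can only be lowered over time if domination up to higher $y$'s has already been secured, which costs one round per ladder step. The doubly-exponential ladder is precisely calibrated so that lowering the threshold by one ladder step per round squares the failure probability each round, so reaching failure probability $\epsilon$ from failure probability $p \le \frac12$ at the start of phase 2 takes $\log\log_{1/p}\frac1\epsilon$ rounds — matching, up to the additive constant absorbed into $4\epsilon$, the upper bound. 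Once the threshold-optimality reduction is in place, the remaining computation is the same geometric/doubly-exponential bookkeeping already carried out for Lemma~\ref{lem:bt1}, run in reverse.
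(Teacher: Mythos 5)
There is a genuine gap: your hard instance does not support the argument you need. You take $n=1$ with a single doubly-exponential ladder $\Pr[v\ge a_j]=p^{2^j}$, on the grounds that this removes the correlation bookkeeping. But the correlation bookkeeping is exactly the load-bearing part of the paper's proof. The paper's instance has $n>\nicefrac{1}{\epsilon}$ rewards that arrive \emph{sequentially within a block} with the "chain" correlation (reward $i$ takes value $i$ only if reward $i-1$ did, with calibrated probabilities so that $\max_i v_i$ is uniform on $\{1,\ldots,n\}$). That structure is what forces any threshold strategy to accept a value \emph{at} its threshold rather than a possibly much higher realized value: within a block the realized values appear in increasing order, so the first value crossing a low threshold $T$ is $\approx T$, never something far above it. This is precisely what lets the paper prove $p_i \le 1-q_{i-1}+3\epsilon$, i.e.\ that setting a low threshold early permanently caps the probability of ever catching high values. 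With $n=1$, a low threshold $\tau$ still lets the algorithm accept the realized high value $v\gg\tau$, so that cap disappears and the key recurrence $q_i\ge q_{i-1}(q_{i-1}-3\epsilon)$ is not forced.

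Your step~(2) also does not produce the $\log\log$ bound as written. At the critical ladder level $j$ with $p^{2^j}\approx\epsilon$, the stochastic-domination constraint you invoke, $\Pr[\ALG\ge a_j]\ge (1-\epsilon)\Pr[v\ge a_j]\approx(1-\epsilon)\epsilon$, is comfortably satisfied already by a single phase-2 round with a threshold at or below $a_j$; the upper bound $1-\tfrac12(1-p^{2^j})^s$ you derive is around $\tfrac12$ even for $s=1$, which is far above $(1-\epsilon)\epsilon$, so the constraint is vacuous and yields no lower bound on $s$. The only binding constraint you extract is "accept something with probability $\ge 1-\epsilon$," but at the bottom of your ladder $\Pr[v< a_0]=1-p$ is a constant, so a single round with threshold below $a_0$ already makes the survival probability drop by a constant factor — nothing forces the doubly-exponential decay. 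Indeed, the whole point of the ladder $p^{2^j}$ is that it is the \emph{friendly} geometry for a threshold algorithm (it is exactly what Algorithm~\ref{alg:cap-p2} simulates internally). A hard instance needs to be maximally spread out — which is what the uniform max distribution gives — and needs the within-block correlation to pin the algorithm to its threshold. Finally, you explicitly defer the exchange/monotonicity reduction (your main anticipated obstacle); the paper handles it with a one-line observation ("sorted $p_i$'s stochastically dominate") that only works because of the uniform structure, so you would need a different argument there as well.
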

\begin{proof}
    Consider an instance with $n>\frac{1}{\epsilon}$ rewards where the value of reward $i$ is $i$ with probability $\frac{n+1-i}{n}$ and $0$ otherwise, where a reward's value can be non-zero only if all previous rewards are non-zero.
    In other words, the first reward has a deterministic value of $1$, and reward $i>1$ has a value of $i$ with probability $\frac{n+1-i}{n+2-i}$ if the value of reward $i-1$ is $i-1$, and otherwise, reward $i$ is $0$.

    The distribution of the prophet on a single copy is a uniform distribution over $\{1,\ldots,n\}$.
    This is since for $j<n$ it holds that $$\Pr[\max_{i} v_i  =j] = \frac{n+1-j}{n} \cdot (1-\frac{n-j}{n+1-j}) = \frac{1}{n} ,$$
    and for $j=n$, $ \Pr[\max_{i} v_i  =j] = \frac{n+1-j}{n} = \frac{1}{n}$.

    Consider an algorithm $\ALG$ on $m_1+ m_2$ blocks that $(1-\epsilon)$-stochastically dominates the performance of the prophet.
    We can assume without loss of generality that the decisions of the algorithm given that the algorithm reached block $i$  are independent of the realizations of previous blocks.
       
    For $i\geq 1$, let $p_i$ be the probability that $\ALG$ selects a value during block $m_1+i$ given this block is reached.  We will assume that for block $m_1+ i$, the algorithm uses a threshold $T_i$ for which $\Pr_{v\sim F}[\max_i v_i \geq T_i] \geq p_i \geq \Pr_{v\sim F}[\max_i v_i > T_i] $, where the value $T_i$ is selected with probability $ s$ such that $ \Pr_{v\sim F}[\max_i v_i > T_i] + \Pr_{v\sim F}[\max_i v_i = T_i] \cdot s =p_i$. This threshold strategy for block $m_1+i$ stochastically dominates any other strategy that selects a reward with probability $p_i$.
We can assume without loss of generality that $p_i$ is an increasing sequence; otherwise, an algorithm with sorted $p_i$'s stochastically dominates any algorithm with the same $p_i$'s.

    For $i\geq 0$, let $q_i $ be the probability that $\ALG$ does not select a value up to block $m_1+i$. By the assumption of the proposition, it holds that $q_0\geq \frac{1}{2}$. For $i\geq 1$, $q_i = q_{i-1} \cdot (1- p_i)$.

    We next prove that for $i\geq 1$ it holds that $$p_i \leq  1- q_{i-1} +3\epsilon.$$
    Assume towards contradiction that exists $i$ for which  $p_i >  1- q_{i-1} +3\epsilon $. Let $i^*$ be the minimal such $i$. We note that $\ALG$ in block $j\geq m_1+i^*$ will only select values that are strictly below $T_{i^*}+2$ (as if a value which is at least $T_{i^*}+2$ appears, then so a value that is at least $T_{i^*}+1$ and less than $T_{i^*}+2$ appear before). Thus,   $$ \Pr[\ALG \geq T_{i^*}+2 ] \leq  1- q_{i^*-1} .$$
    On the other hand, by definition of $T_{i^*}$ and since for every value of $x$, the probability that $\max_{i} v_i $ is in the interval $ [x,x+1) $ is at most $\frac{1}{n} $ which is at most $\epsilon $, we get that:
    $$ \Pr[\max_{i} v_i \geq T_{i^*}+2 ]  =\Pr[\max_{i} v_i \geq T_{i^*} ] - \Pr[\max_{i} v_i \in [ T_{i^*}, T_{i^*}+2 )  ] \geq p_{i^*}- 2\epsilon>   1- q_{i^*-1} + \epsilon,$$
    which contradicts that $\ALG$, $(1-\epsilon)$-stochastically dominates $F^*$.

    Thus, we get that $$ q_i \geq q_{i-1} \cdot \left( q_{i-1} -3\epsilon \right) . $$

    We next prove by induction that $ q_i \geq  2^{-2^i} - 3\epsilon$.
    For $i=0$, it holds by our assumption that $q_0 \geq \frac{1}{2}$.
     For $i=1$, it holds since $$ q_1 \geq q_0 \cdot (q_0-3\epsilon) \geq  \frac{1}{2} \left(\frac{1}{2}-3\epsilon\right) \geq 2^{-2^1}  -3\epsilon .$$
    For $i\geq 1$, then $$ q_{i+1} \geq q_{i} \cdot \left( q_{i} -3\epsilon \right) \geq \left(2^{-2^i} - 3\epsilon  \right) \cdot \left( 2^{-2^i} - 6\epsilon \right) \geq 2^{-2^{i+1}} -9\epsilon \cdot 2^{-2^i}  \geq 2^{-2^{i+1}} -3\epsilon. $$

    Thus, in order for $\ALG$ to $(1-\epsilon)$-stocahstically dominate $F^*$, the algorithm must selects overall with probability at least $(1-\epsilon)$, thus $q_{m_2} \leq \epsilon$, which implies that $$2^{-2^{m_2}} -3\epsilon \leq \epsilon, $$
    which implies that $ m_2 \geq \log \left( \log(\frac{1}{4\epsilon})\right),$ that concludes the proof.    
 \end{proof}

\begin{remark}
    We note that although both parts of the bound of Theorem~\ref{thm:cc} are tight up to the constant, it might be possible to improve the constants for the overall expression for cases where $n$ is of an order of $\log\log\frac{1}{\epsilon}$. 
\end{remark}

We next show that in contrast to the case of independent prophet inequality, where \cite{DBLP:journals/corr/abs-2402-11084} showed that the $(1-\varepsilon)$-competition complexity of block threshold algorithms\footnote{A block threshold algorithm is an algorithm that sets a threshold for each copy (that might be different for different copies).  \cite{DBLP:journals/corr/abs-2402-11084} showed that for the independent case, block threshold algorithms achieve the optimal competition complexity, which reduced the problem to writing an optimization problem as a function of the quantiles of the block thresholds, and approximately solving this optimization.} is $O\left(\log\log\frac{1}{\varepsilon}\right)$, in the case of correlated prophet inequality, the competition complexity of block threshold algorithms can be unbounded. 
Moreover, this claim remains true even when the number of rewards is constant.
\begin{proposition}
For every $\varepsilon<1$ and $M>0$ there is an instance of correlated prophet inequality with $n=2$ rewards for which for every block threshold algorithm, the $(1-\varepsilon)$-competition complexity is at least $M$. \label{prop:hard2}
\end{proposition}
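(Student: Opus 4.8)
\textbf{Proof proposal for Proposition~\ref{prop:hard2}.}

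The plan is to build, for a given $\varepsilon$ and $M$, a two-reward correlated instance on which \emph{every} block-threshold algorithm — i.e., one that picks, possibly adaptively across blocks but not within a block, a single threshold per block — fails to achieve a $(1-\varepsilon)$-fraction of the prophet even with $M$ blocks. The instance is the one sketched in the introduction: set $\delta$ small (to be chosen as a function of $\varepsilon$ and $M$), let $v_1 = (1/\delta)^{G(\delta)}$ where $G(\delta)$ is geometric with success probability $\delta$, and let $v_2 = (1/\delta) \cdot v_1$ deterministically given $v_1$. Thus the pair $(v_1,v_2)$ takes value $\big((1/\delta)^j, (1/\delta)^{j+1}\big)$ with probability $\delta(1-\delta)^j$ for $j \ge 0$ (adjusting indices as convenient). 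The maximum is always $v_2 = (1/\delta)^{G(\delta)+1}$, so $\E[\max_i v_i] = \sum_{j\ge 0} \delta(1-\delta)^j (1/\delta)^{j+1} = \sum_{j\ge 0}(1-\delta)^j \cdot (1/\delta)^j = \sum_{j \ge 0}\big(\tfrac{1-\delta}{\delta}\big)^j$, which diverges for $\delta < 1/2$; so I should instead truncate the geometric at some large level $L = L(M,\varepsilon)$ to keep the prophet's expectation finite, or equivalently take the instance of Proposition~\ref{prop:hard1} with $n$ large and $\delta$ tuned — either way the key ratio is what matters, not the absolute value. Let me take the truncated version: $G$ capped at $L$, so all expectations are finite.

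The core of the argument is to show that on a \emph{single} block, any threshold strategy gets expected value at most (roughly) $\delta \cdot \E[\max]$, or more precisely a vanishing fraction of $\E[\max]$. Fix a threshold $T$. Since the support of $(v_1,v_2)$ consists of powers of $1/\delta$, we may assume $T = (1/\delta)^t$ for some integer $t$ (rounding $T$ up to the next power only helps the algorithm weakly, and rounding down changes nothing structurally). A threshold strategy with threshold $(1/\delta)^t$ accepts $v_1$ iff $v_1 \ge (1/\delta)^t$, i.e. iff $G(\delta) \ge t$; conditioned on reaching $v_2$ (i.e. $G(\delta) < t$, so $G(\delta) \le t-1$), it accepts $v_2 = (1/\delta)^{G(\delta)+1} \le (1/\delta)^t$. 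So: with probability $(1-\delta)^t$ it accepts $v_1 = (1/\delta)^{G}$ where $G \ge t$ — contributing $\sum_{j \ge t}\delta(1-\delta)^j (1/\delta)^j$ — and with the remaining probability it accepts $v_2 = (1/\delta)^{G+1}$ with $G \le t-1$, contributing $\sum_{j=0}^{t-1}\delta(1-\delta)^j(1/\delta)^{j+1} = \sum_{j=0}^{t-1}(1-\delta)^j(1/\delta)^j$. Comparing term-by-term with $\E[\max] = \sum_{j\ge 0}(1-\delta)^j(1/\delta)^j$ (taking the non-truncated heuristic for intuition), the first sum is $\delta$ times the tail $\sum_{j\ge t}(1-\delta)^j(1/\delta)^j$, i.e. negligible against the corresponding tail of the prophet; and the second sum exactly matches the head of the prophet's sum but \emph{stops at $t-1$}, missing the entire tail from $t$ onward — and for the geometric-times-geometric structure, the tail dominates the head when $\delta$ is small. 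Hence for any single threshold $T$, the single-block value is at most a $c(\delta)$-fraction of $\E[\max]$ with $c(\delta) \to 0$ as $\delta \to 0$.

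From here I would assemble the block-threshold bound: a block-threshold algorithm on $M$ blocks, even choosing each block's threshold adaptively based on past (rejected) blocks' realizations, still earns in each block at most the single-block maximum over thresholds of the expected accepted value — because within a block it is just a single-threshold strategy, and the best it can do upon reaching a block is to run the best single-threshold strategy on a fresh copy. So by a union bound over blocks, $\E[\ALG] \le M \cdot \max_T (\text{single-block value with threshold } T) \le M \cdot c(\delta) \cdot \E[\max]$. Choosing $\delta$ small enough that $M \cdot c(\delta) < 1-\varepsilon$ gives the claim. The main obstacle — and the step needing genuine care rather than a one-line union bound — is making precise the claim that a block-threshold algorithm's per-block contribution is bounded by the best \emph{static} single-threshold value on a fresh copy: one must handle the adaptivity across blocks cleanly (the threshold in block $k$ may depend on the realizations in blocks $1,\dots,k-1$, but those realizations are independent of block $k$, so conditioning on "reach block $k$ and use threshold $T_k$" still leaves block $k$ distributed as a fresh copy), and one must verify the term-by-term comparison of the two sums rigorously with the truncation at level $L$ in place, checking that the truncation does not spoil the "tail dominates head" phenomenon (it does not, as long as $L$ is taken large relative to $\log_{1/\delta}$ of everything, or one simply observes that the truncated prophet's expectation is $L$ by the Proposition~\ref{prop:hard1} computation while the single-threshold value is $O(1)$ uniformly in the threshold, so the ratio is $O(1/L)$). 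Then pick $L$ (equivalently $\delta$ and the cap) so that $M/L < 1-\varepsilon$.
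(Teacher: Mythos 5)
Your high-level plan matches the paper's proof: build a two-reward geometric/power-law instance, show that any single-block threshold captures only a vanishing fraction of the prophet, and then use the (correct) observation that the expected value of a block-threshold algorithm on $k$ blocks is at most $k$ times the best single-block value, since each block is a fresh independent copy regardless of how the threshold is adapted to past realizations. That last step is handled the same way as the paper. However, the specific instance you set up is \textbf{not} a hard instance, and the flaw is not a matter of cleanup.

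You take $v_1 = (1/\delta)^{G}$ with $G$ geometric of success probability $\delta$, so $\Pr[G = j] = \delta(1-\delta)^j$, truncated at level $L$, and $v_2 = (1/\delta)\,v_1$. Here the level probability decays at rate $(1-\delta)$ (slow for small $\delta$) while the value grows at rate $1/\delta$ (fast), so the per-level contributions to $\E[\max]$ are $\delta(1-\delta)^j(1/\delta)^{j+1} = \big(\tfrac{1-\delta}{\delta}\big)^j$, which grow geometrically, and the truncation term $(1-\delta)^L(1/\delta)^{L+1}$ is the single dominant one. Concretely, writing $r = (1-\delta)/\delta$, one has $\E[\max] = \sum_{j<L} r^j + r^L/\delta$, and the truncation term $r^L/\delta$ dwarfs the head sum $\approx r^{L-1}$. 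A single-block threshold set at $(1/\delta)^{L+1}$ accepts $v_2$ exactly when $G \ge L$ (probability $(1-\delta)^L$) and collects that whole dominant term, giving a ratio $\E[\ALG]/\E[\max] = 1 - O(\delta^2)$. So on your instance, $k = 1$ already achieves more than a $(1-\varepsilon)$ fraction for small $\delta$, and no lower bound on the competition complexity follows. Your heuristic that ``the tail dominates the head'' is precisely what makes your instance easy, not hard: because the tail concentrates on a single value, a threshold there captures it.

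The paper's construction avoids this by matching the decay rate to the growth rate so that \emph{every} level contributes equally. It sets $X \sim G(1 - 1/\xi)$ (success probability close to $1$, not close to $0$), so $\Pr[X = i] = (1/\xi)^{i-1}(1 - 1/\xi)$, and $v_1 = \xi^{\min(X,\xi)}$. Each of the $\xi$ levels then contributes $\Theta(\xi^2)$ to the prophet, which totals $\Theta(\xi^3)$, while a threshold picks up only one ``$v_2$'' level (the one that clears the threshold, worth $\Theta(\xi^2)$) plus the $v_1$ contribution (also $\Theta(\xi^2)$). Hence the single-block ratio is $\Theta(1/\xi)$ and can be driven below $(1-\varepsilon)/M$ by choosing $\xi$ large. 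This is the same balance as Proposition~\ref{prop:hard1}, where reward $i$ has value $1/\delta^i$ with probability $\delta^i$. You were on the right track in mentioning Proposition~\ref{prop:hard1} as an alternative, but your claim that the two constructions are ``equivalent'' and ``the key ratio is what matters either way'' is wrong: your parameterization has $(\text{decay}) \times (\text{base}) = (1-\delta)/\delta \gg 1$, whereas the working one has $(\text{decay}) \times (\text{base}) = 1$, and this is exactly the difference between the tail being concentrated at one level (easy for a threshold) versus spread uniformly over $\Theta(\xi)$ levels (hard for a threshold). The informal sketch in the introduction of the paper, which you followed literally, is misleading on this point; the formal proof uses $G(1-1/\xi)$, i.e.\ the inverse parameterization of the geometric. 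Fixing your parameterization to $\Pr[J = j] \propto \delta^j$ (equivalently setting the geometric's success probability to $1-\delta$) and truncating at $L = \Theta(1/\delta)$ would recover the paper's argument. Your concluding numerical claim (``prophet's expectation is $L$, single-threshold is $O(1)$'') is also specific to that corrected parameterization and does not hold for the one you set up, for which the prophet is $\Theta(r^L/\delta)$ and the best threshold gets $\Theta(r^L/\delta)$ as well.

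Finally, a smaller issue: you describe the single-block threshold as accepting $v_2$ whenever $v_1 < T$, ``contributing $\sum_{j=0}^{t-1}(1-\delta)^j(1/\delta)^j$.'' That is the value of a different (non-threshold) algorithm; a threshold at $T$ accepts $v_2$ only when $v_2 \ge T$, i.e.\ only when $G = t-1$. Your expression is a valid upper bound on the threshold's value, but you should say so explicitly rather than describe it as the threshold's behavior.
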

\begin{proof}
    Let $\xi=\lceil\frac{2M}{1-\varepsilon}\rceil$, and let $X$ be a random variable that is distributed according to $G\left(1-\frac{1}{\xi}\right)$.
    Consider an instance of the correlated prophet inequality with two rewards where $v_1 = \xi^{\min(X,\xi)}$, and $v_2 = \xi^{\min(X,\xi)+1}$.
The value of the prophet is:
    \begin{eqnarray}
    \E[\max(v_1,v_2)]  & = & \E[v_2]  \nonumber \\ & = & \Pr[X\geq \xi] \cdot \xi^{\xi+1}+\sum_{i=1}^{\xi-1}\Pr[X =i] \cdot \xi^{i+1}  \nonumber \\ & = & \left(\frac{1}{\xi}\right)^{ \xi-1}  \cdot \xi^{\xi+1} +  \sum_{i=1}^{\xi-1} \left(\frac{1}{\xi}\right)^{i-1} (1-\frac{1}{\xi}) \cdot \xi^{i+1} \nonumber \\ & = & \xi^3-\xi^2+\xi. \label{eq:prophetxi}
    \end{eqnarray}

We next analyze the performance of the single threshold algorithm with threshold $\tau$ on a single copy. We assume without loss of generality that $\tau=\xi^{i^*}$ for some $i^*\in \{1,\ldots,\xi+1\}$, then we can bound the performance of the algorithm by the following derivations:
If $i^*\leq \xi $ then 
    \begin{eqnarray*}
        \E[\ALG] & \leq  & \sum_{i=i^*}^{\xi-1}\Pr[X = i] \cdot \xi^i + \Pr[X \geq \xi]\cdot \xi^{\xi} + \Pr[X =i^*-1] \cdot \xi^{i^*} \\ & \leq & \sum_{i=i^*}^{\xi-1}\left(\frac{1}{\xi}\right)^{i-1} \cdot (1-\frac{1}{\xi}) \cdot \xi^i + \left(\frac{1}{\xi}\right)^{\xi-1}\cdot \xi^{\xi} + \left(\frac{1}{\xi}\right)^{i^*-2} \cdot (1-\frac{1}{\xi})\cdot \xi^{i^*} <  2 \xi^2 -2\xi+ 2, 
    \end{eqnarray*}
    where the last inequality follows since $i^*\geq 1$.
Else ($i^*=\xi+1$), then
    $$\E[\ALG] \leq   \Pr[X \geq \xi] \cdot \xi^{\xi+1} = \left(\frac{1}{\xi}\right)^{\xi-1} \cdot \xi^{\xi+1}  = \xi^2 < 2\xi^2-2\xi+2, $$
    where the last inequality holds for every value of $\xi$.

Combining this with Equation~\eqref{eq:prophetxi}, since the performance of an algorithm over $k$ copies can be at most $k$ times the performance on one copy, we get that $$ k \geq \frac{(1-\varepsilon)\cdot\E[\max(v_1,v_2)]}{\E[\ALG]} > \frac{(1-\varepsilon)\cdot  (\xi^3-\xi^2+\xi) }{ 2 \xi^2 -2\xi+ 2} =   \frac{(1-\varepsilon)\cdot \xi}{2} \geq M , $$
which concludes the proof.    
\end{proof}

\bibliography{bib}
\bibliographystyle{abbrvnat}
\appendix

\section{Missing Proofs}
\label{app:missing}
\caragiannis*
\begin{proof}
    By definition $$\Pr[\mathcal{E}^{t}_{i,x}] = \Pr_{v \sim F}[v_i^{(t)}=x] \cdot \Pr_{v^{(t)} \sim F}[\forall {j \neq i}:~ v_j^{(t)} \leq T_0 \mid v_i^{(t)}=x].$$
     By the union bound it holds that $$\Pr_{v^{(t)} \sim F}[\forall {j \neq i}:~ v_j^{(t)} \leq T_0 \mid v_i^{(t)}=x] \geq 1 - \sum_{j \neq i} \Pr_{v^{(t)} \sim F}[v_j^{(t)} > T_0 \mid v_i^{(t)}=x].$$
     Due to pairwise independence it holds that $$\Pr_{v^{(t)} \sim F}[v_j^{(t)} > T_0 \mid v_i^{(t)}=x] = \Pr_{v^{(t)} \sim F}[v_j^{(t)} > T_0].$$
     By definition of $\xi$ it holds that $$\sum_{j \neq i} \Pr_{v^{(t)} \sim F}[v_j^{(t)} > T_0] \leq \xi.$$
     Combining everything together we get that: $$\Pr[\mathcal{E}^{t}_{i,x}] \geq \Pr_{v^{(t)} \sim F}[v_i^{(t)}=x] \cdot \left( 1 - \sum_{j \neq i} \Pr_{v^{(t)} \sim F}[v_j^{(t)} > T_0] \right) \geq \left( 1 - \xi \right) \cdot \Pr_{v^{(t)} \sim F}[v_i^{(t)}=x] =\left( 1 - \xi \right) \cdot \Pr_{v \sim F}[v_i=x] ,$$
     which concludes the proof.
\end{proof}
\end{document}